\documentclass{article}

\usepackage{hyperref}       
\usepackage[final]{neurips_2019}

\usepackage[utf8]{inputenc} 
\usepackage[T1]{fontenc}    

\usepackage{xcolor} 
\usepackage{dm-colors}
\hypersetup{
	colorlinks,
	citecolor=dmblue300,
	linkcolor=dmred300,
	urlcolor=dmorange500}

\usepackage{booktabs}       
\usepackage{amsfonts}       
\usepackage{nicefrac}       
\usepackage{microtype}      
\usepackage{thm-restate}
\usepackage{amsthm}
\usepackage{amsmath}
\usepackage{amssymb}
\usepackage{bm}
\usepackage{bbm}
\usepackage{mathtools}
\usepackage{xspace}
\usepackage{wrapfig}
\usepackage{bookmark}

\usepackage[disable]{todonotes}

\usepackage{graphicx} 
\usepackage{subcaption}

\usepackage{algorithm}
\usepackage{algpseudocode}

\usepackage{placeins}

\usepackage{tikz}

\usepackage[shortlabels]{enumitem}

\newcommand{\algdelta}{\delta'}

\newcommand{\pacdelta}{\delta}

\newcommand{\auxdelta}{\tilde{\delta}}

\newcommand{\supFsZero}{M_{\lambda}}

\newcommand{\barepsilon}{\kappa}
\newcommand{\cOtilde}[1]{\tilde{\mathcal{O}}\pa{#1}}

\newcommand{\estV}{{\tt sampleV}}
\newcommand{\estQ}{{\tt estimateQ}}
\newcommand{\Vtilde}{\tilde{V}}
\newcommand{\event}{\mathcal{A}}
\newcommand{\subevent}{\mathcal{B}}
\newcommand{\auxevent}{\mathcal{E}}
\newcommand{\params}{\mathbf{params}}

\newcommand{\nsamplev}{n_{\estV}}
\newcommand{\oracle}{{\tt oracle}}
\newcommand{\policy}{\pi}
\newcommand{\entropy}{\mathcal{H}}
\newcommand{\cP}{\mathcal{P}}

\newcommand{\given}{\Big|}
\newcommand{\logsumexp}[1]{ \mathrm{LogSumExp}_{#1} }
\newcommand{\clip}[2]{\mathbf{clip}_{#1}\left(#2\right)}

\newcommand{\overeq}[1]{\stackrel{\mathbf{#1}}{=}}
\newcommand{\overleq}[1]{\stackrel{\mathbf{#1}}{\leq}}

\newcommand{\imag}{\mathrm{i}}

\newcommand{\genericmcts}{{\tt genericMCTS}}
\newcommand{\mctssearch}{{\tt search}}
\newcommand{\evalleaf}{{\tt evaluateLeaf}}
\newcommand{\selectaction}{{\tt selectAction}}

\newcommand{\transprob}{P}
\newcommand{\rewardfunc}{R}
\newcommand{\nextstate}{z}

\newcommand{\states}{\mathcal{S}}
\newcommand{\actions}{\mathcal{A}}
\newcommand{\svar}{s}

\newcommand{\ValueFunc}{V}
\newcommand{\Value}[1]{\ValueFunc\left(#1\right)}

\newcommand{\ValueFuncH}{\hat{V}}
\newcommand{\ValueH}[1]{\ValueFuncH\left(#1\right)}
\newcommand{\QFunc}{Q}
\newcommand{\QFuncHat}{\hat{Q}}
\newcommand{\QFuncBar}{\bar{Q}}

\newcommand{\rewardrand}{R}
\newcommand{\transrand}{Z}

\newcommand{\EE}[2][]{\mathbb{E}_{#1}\left[#2\right]}

\newcommand{\PP}[1]{\mathbb{P}\left[#1\right]}
\newcommand{\ceil}[1]{\left\lceil#1\right\rceil}

\newcommand{\cO}[1]{\mathcal{O}\pa{#1}}

\newcommand{\indic}[1]{\mathbb{I}_{\left\lbrace #1 \right\rbrace}}

\newcommand{\pa}[1]{\left(#1\right)}

\DeclarePairedDelimiterX{\norm}[1]{\lVert}{\rVert}{#1}
\DeclarePairedDelimiterX{\abs}[1]{\lvert}{\rvert}{#1}
\newcommand{\normm}[1]{\lVert#1\rVert} 

\newcommand{\transpose}{^\mathsf{\scriptscriptstyle T}}

\newcommand{\eps}{\varepsilon}
\renewcommand{\epsilon}{\varepsilon}
\renewcommand{\hat}{\widehat}
\renewcommand{\tilde}{\widetilde}
\renewcommand{\bar}{\overline}

\let\originalleft\left
\let\originalright\right
\renewcommand{\left}{\mathopen{}\mathclose\bgroup\originalleft}
\renewcommand{\right}{\aftergroup\egroup\originalright}

\newtheorem{fact}{Fact}
\newtheorem{lemma}{Lemma}
\newtheorem{proposition}{Proposition}

\newtheorem*{theorem*}{Theorem}

\newtheorem{corollary}{Corollary}

\definecolor{babyblue}{rgb}{0.54, 0.81, 0.94}
\definecolor{citrine}{rgb}{0.89, 0.82, 0.04}
\definecolor{misocolor}{rgb}{0.16,0.27,0.86}

\newcommand{\K}{K}

\newcommand{\samcomplex}[2]{n\pa{#1, #2}}

\newcommand{\R}{\mathbb{R}}
\newcommand{\N}{\mathbb{N}}

\newcommand{\Lb}{L}

\newcommand{\avar}{a}

\newcommand{\Func}{F}
\newcommand{\stateFunc}[1]{\Func_{#1}}

\newcommand{\range}{\R} 
\newcommand{\rewardrange}{[0, 1]}
\newcommand{\gammarange}{[0,1[}

\newcommand{\ouralgo}{{\tt SmoothCruiser}\xspace}
\newcommand{\UCT}{{\tt UCT}\xspace}
\newcommand{\TRPO}{{\tt TRPO}\xspace}
\newcommand{\SSA}{{\tt SSA}\xspace}

\newcommand{\AthreeC}{{\tt A3C}\xspace}

\newcommand{\CommaBin}{\mathbin{\raisebox{0.5ex}{,}}}

\title{Planning in entropy-regularized \\Markov decision processes and games}

\author{%
   Jean-Bastien Grill\thanks{equal contribution}\\
   DeepMind Paris\\
   \texttt{jbgrill@google.com} \\
   \And
   \!\!\!\!\!Omar D.\,Domingues\textcolor{dmred300}{\footnotemark[1]} \\
   \!\!\!\!\!SequeL team, Inria Lille  \\
   \!\!\!\!\!\texttt{omar.darwiche-domingues@inria.fr} \\
   \AND
   \quad Pierre M\'enard \\
   \quad SequeL team, Inria Lille  \\
   \quad \texttt{pierre.menard@inria.fr} \\
   \And
   R\'emi Munos \\
   DeepMind  Paris\\
   \texttt{munos@google.com} 
   \And
   Michal Valko\\
   DeepMind Paris\\
   \texttt{valkom@deepmind.com} \\
}

\begin{document}

\maketitle

\begin{abstract}
We propose \ouralgo, a new planning algorithm for estimating the value function in entropy-regularized Markov decision processes and two-player games, given a generative model of the environment. \ouralgo makes use of the smoothness of the Bellman operator promoted by the regularization to achieve \textit{problem-independent sample complexity} of order $\tilde{\mathcal{O}}(1/\epsilon^4)$ for a desired accuracy~$\epsilon$, whereas for non-regularized settings there are no known algorithms with guaranteed polynomial sample complexity in the  worst case. 
\end{abstract}


\section{Introduction}
Planning with a generative model is \textit{thinking before acting}. An agent thinks  using a world model that it has built from prior experience \citep{sutton1991, Sutton2ndEdition2018}. In the present paper, we study planning in two types of environments, \textit{Markov decision processes} (MDPs) and \textit{two-player turn-based zero-sum games}. In both settings, agents interact with an environment by taking actions and receiving rewards. Each action changes the state of the environment and the agent aims to choose actions to maximize the sum of rewards. We assume that we are given a generative model of the environment, that takes as input a state and an action and returns a reward and a next state as output. Such generative models, called \textit{oracles}, are typically built from known data and involve simulations, for example, a physics simulation. In many cases, simulations are costly. For example, simulations may require the computation of approximate solutions of differential equations or the discretization of continuous state spaces. Therefore, a smart algorithm  \textit{makes only a small the number of oracles calls} required to estimate the value of a state. The total number of oracle calls made by an algorithm is referred to as \textit{sample complexity}. 

The value of a state $s$, denoted by $\ValueFunc(s)$, is the maximum of the sum of discounted rewards that can be obtained from that state. We want an algorithm that returns an estimate of precision $\epsilon$ of the $\ValueFunc(s)$ for any fixed $s$ and  has a low  sample complexity, which should naturally be a function of $\epsilon$. An agent can then use this algorithm to predict the value of the possible actions at any given state and choose the best one. The main advantage in estimating the value of a \textit{single} given state $s$ at a time instead of the complete value function\footnote{as done by approximate dynamic programming} $s \mapsto \ValueFunc(s)$ is that we can have algorithms whose sample complexity does not depend on the size of the state space, which is important when our state space is very large or continuous. On the other hand, the disadvantage is that the algorithm must be run each time a new state is encountered.

Our main contribution is an algorithm that \emph{estimates the value function in a given state} in planning problems that satisfy specific smoothness conditions, which is the case when the rewards are regularized by adding an entropy term. We exploit this smoothness property to obtain a polynomial sample complexity of order $\cOtilde{1/\epsilon^{4}}$ that is \textit{problem independent}. 


\paragraph{Related work} \cite{kearns1999sparse} came up with a sparse sampling algorithm (\SSA) for planning in MDPs with finite actions and arbitrary state spaces. \SSA estimates the value of a state $s$ by building a sparse look-ahead tree starting from $s$. However, \SSA achieves a sample complexity of $\cO{(1/\epsilon)^{\log(1/\epsilon)}}$, which is non-polynomial in $1/\epsilon$. \SSA is slow since its search is \textit{uniform}, i.e., it does not select actions adaptively. \cite{walsh2010integrating} gave an improved version of  \SSA with adaptive action selection, but its sample complexity is still non-polynomial. The \UCT algorithm \citep{kocsis2006bandit}, used for planning in MDPs and games, selects actions based on optimistic estimates of their values and has good empirical performance in several applications. However, the sample complexity of \UCT can be worse than exponential in $1/\epsilon$ for some environments, which is mainly due to exploration issues \citep{coquelin2007bandit}. Algorithms with sample complexities of order $\cO{1/\epsilon^d}$, where $d$ is a problem-dependent quantity, have been proposed for  deterministic dynamics  \citep{hren2008optimistic}, and in an open-loop\footnote{This means that the policy is seen as a function of time, not the states. The open-loop setting is particularly adapted to environments with deterministic transitions.}\,setting \citep{bubeck2010open, leurent2019practical, bartlett2019scale-free},
for bounded number of next states and a full MDP model is known \citep{busoniu2012optimistic}, 
for bounded number of next states in a \textit{finite-horizon} setting \citep{feldman2014simple}, 
for bounded number of next states \citep{szorenyi2014optimistic},  
and for general MDPs \citep{grill2016blazing}. In general, when the state space is infinite and the transitions are stochastic, the problem-dependent quantity $d$ can make the sample complexity guarantees exponential. For a related setting, when rewards are only obtained in the leaves of a fixed tree, \cite{kaufmann2017monte} and \cite{huang2017structured} present algorithms to identify the optimal action in a game based on best-arm identification tools.

Entropy regularization in MDPs and reinforcement learning have been employed in several commonly used algorithms. In the context of policy gradient algorithms, common examples are the \TRPO algorithm  \citep{schulman2015trust} which uses the Kullback-Leibler divergence between the current and the updated policy to constrain the gradient step sizes, the \AthreeC algorithm \citep{mnih2016asynchronous} that penalizes policies with low entropy to improve exploration, and the work of \cite{neu2017unified} presenting a theoretical framework for entropy regularization using the joint state-action distribution. Formulations with entropy-augmented rewards, which is the case in our work, have been used to learn multi-modal policies to improve exploration and robustness \citep{haarnoja2017reinforcement, haarnoja2018soft} and can also be related to policy gradient methods \citep{schulman2017equivalence}.  Furthermore, \cite{geist2019theory} propose a theory of regularized MDPs which includes  entropy as a special case. Summing up, reinforcement learning knows \textit{how} to employ entropy regularization. In this work, we tasked ourselves to give insights on \textit{why}.

\section{Setting and motivation}
\label{s:def}

Both MDPs and two-player games can be formalized as a tuple $(\states, \actions ,\transprob, \rewardfunc, \gamma)$, where $\states$ is the set of states, $\actions$ is the set of actions, $\transprob \triangleq \{\transprob(\cdot|s,a)\}_{s,a \in \states\times\actions}$ is a set of probability distributions over $\states$,  $\rewardfunc: \states\times\actions\to\rewardrange$ is a (possibly random) reward function and $\gamma\in\gammarange$ is the known discount factor. In the MDP case, at each round $t$, an agent is at state $s$, chooses action $a$ and observes a reward $\rewardfunc(s,a)$ and a transition to a next state $\nextstate \sim \transprob(\cdot | s,a)$. In the case of turn-based two-player games, there are two agents and, at each round $t$, an agent chooses an action, observes a reward and a transition;  at round $t+1$ it's the other player's turn. This is equivalent to an MDP with an augmented state space $\states^+ \triangleq \states \times \{1, 2\}$ and transition probabilities such that $\transprob( (\nextstate, j)| (s, i), a) = 0$ if $i = j$. We assume that the action space $\actions$ is finite with cardinality $K$ and the state space $\states$ has arbitrary (possibly infinite) cardinality.  

Our objective is to find an algorithm that outputs a good estimate of the value $\ValueFunc(s)$ for any given state $\svar$ as quickly as possible. An agent can then use this algorithm to choose the best action in an MDP or a game. More precisely, for a state $\svar\in\states$ and given $\epsilon > 0$ and $\pacdelta > 0$, our goal is to compute an estimate $\ValueH{\svar}$ of $\Value{\svar}$ such that
$\PP{\big| \ValueH{\svar} - \Value{\svar}\big| > \epsilon} \le \pacdelta$ with small number of oracle calls required to compute this estimate. In our setting, we consider the case of \textit{entropy-regularized} MDPs and games, where the objective is augmented with an entropy term. 

\subsection{Value functions}

\paragraph{Markov decision process} The policy $\policy$ of an agent is a function from $\states$ to $\cP(\actions)$, the set of probability distributions over $\actions$. We denote by $\policy(a|s)$ the probability of the agent choosing action $a$ at state $s$. In MDPs, the value function at a state $s$, $\ValueFunc(s)$, is defined as the supremum over all possible policies of the expected sum of discounted rewards obtained starting from $s$, which satisfies the Bellman equations \citep{puterman1994markov},
\begin{align}
	\label{eq:value-function-mdp}
	\forall s \in \states, \; \ValueFunc(s) & = \max_{\pi(\cdot|s) \in \cP(\actions)} \EE{ \rewardfunc(s,a) + \gamma \ValueFunc(\nextstate)}, \; a \sim \pi(\cdot|s), \; \nextstate \sim \transprob(\cdot|s,a). 
\end{align}
\paragraph{Two-player turn-based zero-sum games} In this case, there are two agents (1 and~2), each one with its own policy and different goals. If the policy of Agent~2 is fixed, Agent~1 aims to find a policy that \emph{maximizes} the sum of discounted rewards. Conversely, if the policy of Agent~1 is fixed, Agent~2 aims to find a policy that \emph{minimizes} this sum. Optimal strategies for both agents can be shown to exist and for any $(s,i) \in \states^+ \triangleq \states \times \{1, 2\}$, the value function is defined as  \citep{hansen2013strategy}
\begin{align}\label{eq:value-function-games}
	\ValueFunc(s, i) \triangleq%
	\begin{cases}
		\max_{\pi(\cdot|s) \in \cP(\actions)}  \EE{ \rewardfunc((s,i),a) + \gamma \ValueFunc(\nextstate, j)}, &\text{ if } i = 1, \\
		\min_{\pi(\cdot|s) \in \cP(\actions)}  \EE{ \rewardfunc((s,i),a) + \gamma \ValueFunc(\nextstate, j)}, &\text{ if } i = 2,
	\end{cases}
\end{align}
with $a \sim \pi(\cdot|s) $ and $(\nextstate, j) \sim \transprob(\cdot| (s,i), a)$. In this case, the function $s \mapsto \ValueFunc(s, i)$ is the optimal value function for Agent $i$ when the other agent follows its optimal strategy.

\paragraph{Entropy-regularized value functions} Consider a regularization factor $\lambda > 0$. In the case of MDPs, when rewards are augmented by an entropy term, the value function at state $s$ is given by \citep{haarnoja2017reinforcement, dai2018sbeed, geist2019theory}
\begin{align}
\label{eq:reg-value-function-mdp}
\ValueFunc(s) &\triangleq \max_{\pi(\cdot|s) \in \cP(\actions)} \Big \{ \EE{ \rewardfunc(s,a) + \gamma \ValueFunc(\nextstate)} + \lambda \entropy\pa{\pi\pa{\cdot|s}} \Big \}, \; a \sim \pi(\cdot|s), \; \nextstate \sim \transprob(\cdot|s,a) \nonumber \\
&= \lambda \log \sum_{a\in\actions} \exp \pa{ \tfrac{1}{\lambda} \EE{ \rewardfunc(s, a) + \gamma \ValueFunc(\nextstate)}  }, \; \nextstate \sim \transprob(\cdot|s,a),
\end{align}
where $\entropy\pa{\pi\pa{\cdot|s}}$ is the entropy of the probability distribution $\pi(\cdot|s)\in\cP(\actions)$.

The function $\logsumexp{\lambda}: \R^K \to \R$, defined as	$\logsumexp{\lambda}(x)\triangleq \lambda \log\sum_{i=1}^K \exp\pa{x_i/\lambda}$, is a smooth approximation of the $\max$ function, since $\norm{\max - \logsumexp{\lambda}}_\infty \leq \lambda \log K$. Similarly, the function $-\logsumexp{-\lambda}$ is a smooth approximation of the $\min$ function. This allows us to define the regularized version of the value function for turn-based two player games, in which both players have regularized rewards, by replacing the $\max$ and the $\min$ in Equation \ref{eq:value-function-games} by their smooth approximations. 


For any state $s$, let $\stateFunc{s}  \triangleq \logsumexp{\lambda}$ or $\stateFunc{s} \triangleq -\logsumexp{-\lambda}$ depending on $s$. Both for MDPs and games, we can write the entropy-regularized value functions as
\begin{equation}\label{eq:general-value-function}
\ValueFunc(s) = \stateFunc{\svar}\pa{\QFunc_s}, \; \mbox{ with } \; \QFunc_s(a) \triangleq \EE{ \rewardfunc(s, a) + \gamma \ValueFunc(\nextstate)}, \;  \nextstate \sim \transprob(\cdot|s,a),
\end{equation}
where $\QFunc_s \triangleq \pa{\QFunc_s(a)}_{\avar\in\actions}$, the $Q$ function at state $s$, is a vector in $\mathbb{R}^{\K}$ representing the value of each action. The function $\stateFunc{s}$ is the \emph{Bellman operator} at state $s$, which becomes smooth due to the entropy regularization.

\paragraph{Useful properties} Our algorithm exploits the smoothness property of  $\stateFunc{s}$ defined above. In particular, these functions are $L$-smooth, that is, for any $Q, Q' \in \range^K$, we have
\begin{align}
  \abs{\stateFunc{\svar}(Q) - \stateFunc{\svar}(Q') - (Q-Q')\transpose \nabla\stateFunc{\svar}(Q')}
	\le \Lb\norm{Q-Q'}_2^2, \mbox{\ with\ } L =1/\lambda\cdot
\end{align}

Furthermore, the functions $\stateFunc{s}$ have two important properties: $\nabla\stateFunc{\svar}(Q)\footnote{$\nabla\stateFunc{\svar}(Q)$ is the gradient of $\stateFunc{\svar}(Q)$ with respect to $Q$.} \succeq 0$ and  $\normm{\nabla\stateFunc{\svar}(Q)}_1 = 1$ for all $Q \in \R^K$. This implies that the gradient $\nabla\stateFunc{\svar}(Q)$ defines a probability distribution.\footnote{It is a Boltzmann distribution with temperature $\lambda$.}



\paragraph{Assumptions}

We assume that $\states$, $\actions$, $\lambda$, and $\gamma$ are given to the learner. Moreover, we assume that we can access a generative model, the \textit{oracle}, from which we can get reward and transition samples from arbitrary state-action pairs. Formally, when called with parameter $(\svar,\avar)\in\states\times\actions$, the oracle outputs a new random variable $(\rewardrand,\transrand)$ independent from any other outputs received from the generative model so far such that $\transrand \sim \transprob(\cdot|s,a)$ and $\rewardrand$ has same distribution as $\rewardfunc(s,a)$. We denote a call to the oracle as $\rewardrand, \transrand \gets \oracle(s, a).$ 





\subsection{Using regularization for the polynomial sample complexity}
 To pave the road for \ouralgo, we consider two extreme cases, based on the strength of the regularization:
\begin{enumerate}
	\item {\bf Strong regularization\ } In this case, $\lambda \to \infty$ and $L = 0$, that is, $\stateFunc{s}$ is linear for all $s$: $\stateFunc{s}(x) = w_s\transpose x$, with $\normm{w_s}_1 = 1$, $w_s \in \R^k$ and $w_s \succeq 0$,
	\item {\bf No regularization\ } In this case, $\lambda=0$ and $L \to \infty$, that is, $\stateFunc{s}$ cannot be well approximated by a linear function.\footnote{This is the case of the $\max$ and $\min$ functions.}
\end{enumerate}
In the strongly regularized case, we can approximate the value $\ValueFunc(s)$ with $\tilde{\mathcal{O}}(1/\epsilon^2)$ oracle calls. This is due to the linearity of $\stateFunc{s}$, since the value function can be written as $\ValueFunc(s) =   \EE{\sum_{t=0}^\infty \gamma^t \rewardfunc(S_t, A_t)\mid S_0 = s}$ where $A_t$ is distributed according to the probability vector $w_{S_t}$. As a result, $\ValueFunc(s)$ can be estimated by Monte-Carlo sampling of trajectories.


With no regularization, we can apply a simple adaptation of the sparse sampling algorithm of \cite{kearns1999sparse} that we briefly describe. Assume that we have an subroutine that provides an approximation of the value function with precision $\epsilon/\sqrt{\gamma}$, denoted by $\ValueFuncH_{\epsilon/\sqrt{\gamma}}(s)$, for any $s$. We can call this subroutine several times as well as the oracle to get improved estimate $\ValueFuncH$ defined as
\begin{align*}
\ValueFuncH(s) = \stateFunc{s}\pa{\QFuncHat_s}\quad \mbox{ with }\quad \QFuncHat_s(a) \gets  \frac{1}{N} \sum_{i=1}^N \left[ r_i(s,a)  + \gamma \ValueFuncH_{\epsilon/\sqrt{\gamma}}(\nextstate_i) \right],
\end{align*}
where $r_i(s,a)$ and $\nextstate_i$ are rewards and next states sampled by calling the oracle  with parameters $(s,a)$. By Hoeffding's inequality, we can choose $N = \cO{1/\epsilon^2}$ such that $\ValueFuncH(s)$ is an approximation of $\ValueFunc(s)$ with precision $\epsilon$ with high probability. By applying this idea recursively, we start with $\ValueFuncH = 0$, which is an approximation of the value function with precision $1/(1-\gamma)$, and progressively improve the estimates towards a desired precision $\epsilon$, which can be reached at a recursion depth of $H = \cO{\log(1/\epsilon)}$. Following the same reasoning as \cite{kearns1999sparse},  this approach has a sample complexity of $\cO{(1/\epsilon)^{\log(1/\epsilon)}}$: to estimate the value at a given recursion depth, we make $\cO{1/\epsilon^2}$ recursive calls  and stop once we reach the maximum depth, resulting in a sample complexity of
\begin{align*}
    \underbrace{\frac{1}{\epsilon^2} \times \cdots \times \frac{1}{\epsilon^2}}_{\cO{\log(1/\epsilon)} \mbox{ times } } = \pa{\frac{1}{\epsilon}}^{\cO{ \log\pa{\frac{1}{\epsilon}} }}\cdot
\end{align*} 
In the next section, we provide $\ouralgo$ (Algorithm \ref{alg:ouralgo}), that uses the assumption that the functions $\stateFunc{s}$ are $L$-smooth with $0 < L < \infty$ to interpolate between the two cases above and obtain a sample complexity of $\cOtilde{1/\epsilon^{4}}$.

\section{SmoothCruiser}
We now describe our planning algorithm. Its building blocks are two procedures, $\estV$ (Algorithm~\ref{alg:samplev}) and $\estQ$ (Algorithm \ref{alg:estq}) that recursively call each other. The procedure $\estV$ returns a noisy estimate of $\ValueFunc(s)$ with a bias bounded by $\epsilon$. The procedure $\estQ$  averages the outputs of several calls to $\estV$ to obtain an estimate $\QFuncHat_s$ that is an approximation of $\QFunc_s$ with precision $\epsilon$ with high probability. Finally, $\ouralgo$ calls $\estQ(s, \epsilon)$ to obtain $\QFuncHat_s$ and outputs $\ValueFuncH(s) = \stateFunc{s}(\QFuncHat_s)$. Using the assumption that $\stateFunc{s}$ is 1-Lipschitz, we can show that $\ValueFuncH(s)$ is an approximation of $\ValueFunc(s)$ with precision $\epsilon$. Figure \ref{fig:smoothcruiser} illustrates a call to $\ouralgo$.

\begin{figure}[ht!]
    \centering
    \includegraphics[width=0.6\textwidth]{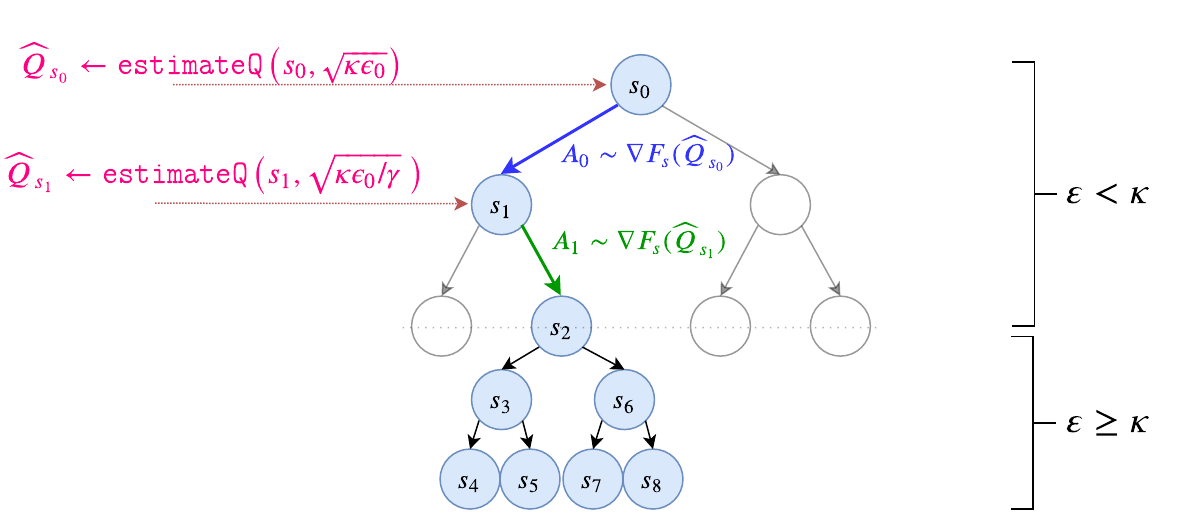}
    \caption{Visualization of a call to $\ouralgo(s_0, \varepsilon_0,\algdelta)$.}
    \label{fig:smoothcruiser}
\end{figure}

\subsection{Smooth sailing}
\begin{wrapfigure}{r}{0.46\textwidth}
  \vspace{-8mm}
  \begin{minipage}{0.46\textwidth}

	\begin{algorithm}[H] 
		\centering
		\caption{\ouralgo}\label{alg:ouralgo}
		\footnotesize
		\begin{algorithmic}
			\State {\bfseries Input:} $(\svar, \epsilon,\algdelta)\in\states\times\R_+\!\times\R_+$
			\State $\supFsZero \gets \sup_{s\in\states}|\stateFunc{\svar}(0)| = \lambda\log K $ 
			\State $\barepsilon \gets (1-\sqrt{\gamma})/(KL)$
			\State Set $\algdelta$, $\barepsilon,$ and $\supFsZero$ as  global parameters
			\State $\QFuncHat_s  \gets \estQ(\svar,\epsilon)$
			\State {\bfseries Output:} $\stateFunc{\svar}\pa{\QFuncHat_s }$
		\end{algorithmic}
	\end{algorithm}
\end{minipage}
\vspace{-4mm}
\end{wrapfigure}




The most important part of the algorithm is the procedure $\estV$, that returns a low-bias estimate of the value function. Having the estimate of the value function, the procedure $\estQ$ averages the outputs of $\estV$ to obtain a good estimate of the $Q$ function with high probability. The main idea of $\estV$ is to first compute an estimate of precision $\mathcal{O}(\sqrt{\epsilon})$ of the value of each action $\{\QFuncHat_s(a)\}_{a \in \actions}$ to linearly approximate the function $\stateFunc{\svar}$ around $\QFuncHat_s.$ The  local approximation of $\stateFunc{\svar}$ around $\QFuncHat_s$ is subsequently used to estimate the value of $s$ with a better precision, of order $\mathcal{O}(\epsilon)$, which is possible due to the smoothness of $\stateFunc{\svar}$. 
\vskip -0.5em
\begin{minipage}[t]{0.45\textwidth}
	\begin{algorithm}[H] 
		\centering
		\caption{\estV}\label{alg:samplev}
		\footnotesize
		\begin{algorithmic}[1]
			\State {\bfseries Input:} $(\svar, \epsilon)\in\states\times\R_+$
			\If {$\epsilon \ge (1+M_\lambda)/(1-\gamma)$}
			\State {\bfseries Output:} $0$
			\ElsIf {$\epsilon \geq \barepsilon$}
			\State $ \QFuncHat_s \gets \estQ(\svar,\epsilon)$
			\State {\bfseries Output:} $\stateFunc{\svar}\pa{\QFuncHat_s}$
			\ElsIf {$\epsilon < \barepsilon$}
			\State $\QFuncHat_s  \gets \estQ(\svar,\sqrt{\barepsilon\epsilon})$
			\State $A \gets $ action drawn from $\nabla\stateFunc{\svar}\pa{\QFuncHat_s}$ 
			\State $(\rewardrand, \transrand) \gets \oracle(\svar, A)$
			\State $\hat{V} \gets \estV(\transrand,\epsilon/\sqrt{\gamma})$
			\State \textbf{Output:} 
			\State \quad {$\stateFunc{\svar}\pa{\QFuncHat_s } - \QFuncHat_s \transpose\nabla\stateFunc{\svar}\pa{\QFuncHat_s } + (\rewardrand + \gamma\hat{V})$ } 
			\EndIf
		\end{algorithmic}
	\end{algorithm} 
\end{minipage}\hfill
\begin{minipage}[t]{0.46\textwidth}
	\begin{algorithm}[H]  
		\centering
		\caption{\estQ}\label{alg:estq}
		\footnotesize
		\begin{algorithmic}[1]
			\State {\bfseries Input:} $(\svar, \epsilon)$
			\State $N(\epsilon) \gets \ceil{\frac{18(1+\supFsZero)^2}{(1-\gamma)^4(1-\sqrt{\gamma})^2}  \frac{\log\pa{2K/\algdelta}}{\epsilon^2}}$
			\For{$\avar\in\actions$}
			\State $q_i \gets 0$ for $i\in{1,...,N(\epsilon)}$
			\For{$i\in 1,...,N(\epsilon)$}
			\State $(\rewardrand, \transrand) \gets$ $\oracle(\svar, \avar)$. 
			\State $\hat{V} \gets \estV\pa{\transrand,\epsilon/\sqrt{\gamma}}$
			\State $q_i \gets \rewardrand + \gamma\hat{V}$
			\EndFor
			\State $\QFuncHat_s(a) \gets \mathbf{mean}\pa{q_1, \ldots, q_N}$
			\State // clip $\QFuncHat_s(a)$ to $[0,(1+M_\lambda)/(1-\gamma)]$
			\State $\QFuncHat_s(a) \gets \max(0, \QFuncHat_s(a))$
			\State $\QFuncHat_s(a) \gets \min((1+\supFsZero)/(1-\gamma), \QFuncHat_s(a))$
			\EndFor
			\State \textbf{Output:} $\QFuncHat_s$ 
		\end{algorithmic}
	\end{algorithm} 
\end{minipage}

\vskip 1em

For a target accuracy $\epsilon$ at state $s$, $\estV$ distinguishes three cases, based on a reference threshold $\barepsilon \triangleq (1-\sqrt{\gamma})/(KL)$, which is  the maximum value of $\epsilon$ for which we can compute a good estimate of the value function using linear approximations of $\stateFunc{\svar}$.

\begin{itemize}
    \item 

\textbf{First,} if $\epsilon \geq (1+\lambda\log K)/(1-\gamma)$, then 0 is a valid output, since $\ValueFunc(s)$ is bounded by $(1+\lambda\log K)/(1-\gamma)$. This case furthermore ensures that our algorithm terminates, since the recursive calls are made with increasing values of $\epsilon$.
    \item 
\textbf{Second,} if $\barepsilon \leq \epsilon \leq (1+\lambda\log K)/(1-\gamma)$, we run  $ \stateFunc{s}\pa{\estQ(s, \epsilon)}$ in which for each action, both the oracle and $\estV$ are called $\cO{1/\epsilon^2}$ times in order to return $\hat V(s)$ which is with high probability  an $\epsilon$-approximation of $\ValueFunc(s).$ 
    \item 
\textbf{Finally,} if $\epsilon < \barepsilon$, we take advantage of the smoothness of $\stateFunc{s}$ to compute an $\epsilon$-approximation of $\ValueFunc(s)$ in a more efficient way than calling the oracle and $\estV$ $\cO{1/\epsilon^2}$ times. We achieve it by calling $\estQ$ with a precision $\sqrt{\barepsilon\epsilon}$ instead of $\epsilon$, which requires $\cO{1/\epsilon}$ calls instead. 
\end{itemize}

\subsection{Smoothness guarantee an improved sample complexity}
In this part, we describe the key ideas that allows us to exploit the smoothness of the Bellman operator to obtain a better sample complexity. Notice that when $\epsilon < \barepsilon$, the procedure $\estQ$ is called to obtain an estimate $\QFuncHat_s$ such that \[\normm{\QFuncHat_s-\QFunc_s}_2 =  \cO{\sqrt{\epsilon/L}}.\] The procedure $\estV$ then continues with computing a linear approximation of $\stateFunc{s}(\QFunc_s)$ around~$\QFuncHat_s$. Using  the $L$-smoothness of $\stateFunc{s},$
we guarantee the $\eps$-approximation,

\begin{align*}
\abs{ \stateFunc{s}(\QFunc_s) - \left\lbrace \stateFunc{s}(\QFuncHat_s) + (\QFunc_s-\QFuncHat_s)\transpose\nabla\stateFunc{s}(\QFuncHat_s)\right\rbrace  } \leq L \norm{\QFuncHat_s-\QFunc_s}_2^2  = \cO{\epsilon}.
\end{align*}

We wish to output this linear approximation, but we need to handle the fact that the vector $\QFunc_s$ (the true $Q$-function at $s$) is unknown. Notice that the vector $\nabla\stateFunc{s}(\QFuncHat_s)$  represents a probability distribution. The term $\QFunc_s\transpose \nabla\stateFunc{s}(\QFuncHat_s)$  in the linear approximation of $\stateFunc{s}(\QFunc_s)$ above can be expressed as 
\begin{align*}
\QFunc_s\transpose \nabla\stateFunc{s}(\QFuncHat_s) = \EE{ \QFunc_s(A) \given \QFuncHat_s}, \; \mbox{ with } A \sim \nabla\stateFunc{s}(\QFuncHat_s).
\end{align*}
Therefore, we can build a low-bias estimate of $\QFunc_s\transpose \nabla\stateFunc{s}(\QFuncHat_s)$ from estimating only $\QFunc_s(A)$:
\begin{itemize}
	\item sample action $A \sim \nabla\stateFunc{s}(\QFuncHat_s)$
	\item call the generative model to sample a reward and a next state $\rewardrand_{s, A}, \transrand_{s, A} \gets \oracle(s, A)$
	\item obtain an $\cO{\epsilon}$-approximation of $\QFunc_s(A)$: $\tilde{Q}(A) = \rewardrand_{s, A} + \gamma \estV\pa{\transrand_{s, A},\epsilon/\sqrt{\gamma}}$
	\item output $\ValueFuncH(s) = \stateFunc{\svar}(\QFuncHat_s) - \QFuncHat_s \transpose\nabla\stateFunc{\svar}(\QFuncHat_s ) + \tilde{Q}(A)$
\end{itemize}

We show that  $\ValueFuncH(s)$ is an $\epsilon$-approximation of the true value function $\ValueFunc(s)$. The benefit of such approach is that we can call $\estQ$ with a precision $\cO{\sqrt{\epsilon}}$ instead of $\cO{\epsilon}$, which thanks to the smoothness of $\stateFunc{s}$,  reduces the sample complexity. In particular, one call to $\estV(s, \epsilon)$ will make $\cO{1/\epsilon}$ recursive calls to $\estV(s, \cO{\sqrt{\epsilon}})$, and the total number of calls to $\estV$ behaves as
\begin{align*}
    \frac{1}{\epsilon} \times \frac{1}{\epsilon^{1/2}}\times \frac{1}{\epsilon^{1/4}} \times \cdots \leq \frac{1}{\epsilon^2}\cdot
\end{align*}

Therefore, the number of $\estV$ calls made by \ouralgo is of order $\cO{1/\epsilon^2}$, which implies that the total sample complexity is of  $\mathcal{O}(1/\epsilon^4).$

\subsection{Comparison to Monte-Carlo tree search}
\label{sec:comparison-to-mcts}

 \begin{wrapfigure}{r}{0.46\textwidth}
  \vspace{-8mm}
\begin{minipage}[t]{0.45\textwidth}
	\begin{algorithm}[H] 
		\centering
		\caption{\genericmcts}\label{alg:generic-mcts}
		\footnotesize
		\begin{algorithmic}
			\State {\bfseries Input:} state $s$
			\Repeat \;  $\mctssearch(s, 0)$
			\Until timeout
			\State {\bf Output:} estimate of best action or value.
		\end{algorithmic}
	\end{algorithm}
\end{minipage}\hfill
\end{wrapfigure}



Several planning algorithms are based on Monte-Carlo tree search (MCTS, \citealp{coulom2007efficient, kocsis2006bandit}).
 Algorithm~\ref{alg:generic-mcts} gives a template for MCTS, which uses the procedure $\mctssearch$ that calls $\selectaction$ and $\evalleaf$. Algorithm~\ref{alg:mcts-search},  $\mctssearch$, returns an estimate of the value function; $\selectaction$ selects the action to be executed (also called \emph{tree policy}); and $\evalleaf$ returns an estimate of the value of a leaf. We now provide the analogies that make it possible to see $\ouralgo$ as an MCTS algorithm:
 

\begin{itemize}
	\item $\estV$ corresponds to the function $\mctssearch$
	\item $\selectaction$ is implemented by calling $\estQ$ to compute $\QFuncHat_s$, followed by sampling an action with probability proportional to $\nabla\stateFunc{s}(\QFuncHat_s)$
	\item $\evalleaf$ is implemented using the sparse sampling strategy of \cite{kearns1999sparse}, if we see leaves as the nodes reached when $\epsilon \geq \barepsilon$
\end{itemize}

\section{Theoretical guarantees}
\label{s:gua}

\begin{wrapfigure}{r}{0.46\textwidth}
\vspace{-8mm}
\begin{minipage}[t]{0.45\textwidth}
\vspace{-4mm}
	\begin{algorithm}[H] 
		\centering
		\caption{\mctssearch}\label{alg:mcts-search}
		\footnotesize
		\begin{algorithmic}
			\State {\bfseries Input:} state $s$, depth $d$
			\If{$d > d_\mathrm{max}$}
			\State {\bf Output:} $\evalleaf(s)$
			\EndIf 
			\State $a \gets \selectaction(s, d)$
			\State $R, Z \gets \oracle(s, a)$
			\State {\bf Output:} $R + \gamma\mctssearch(Z, d+1)$
		\end{algorithmic}
	\end{algorithm}
\end{minipage}
\vspace{-4mm}
\end{wrapfigure}
In Theorem~\ref{thm:sample-complexity} we bound the sample complexity. Note that \ouralgo is non-adaptive, hence its sample complexity is \emph{deterministic} and \emph{problem independent}. Indeed, since our algorithm is agnostic to the output of the oracle, it performs the same number of oracle calls for any given~$\epsilon$ and $\algdelta$, regardless of the \emph{random} outcome of these calls.
\begin{restatable}{theorem}{thmsample}\label{thm:sample-complexity}
	Let $\samcomplex{\epsilon}{\algdelta}$ be the number of oracle calls before \ouralgo terminates. For any state $\svar\in\states$ and $\epsilon, \algdelta > 0$,
	\begin{align*}
		\samcomplex{\epsilon}{\algdelta} & \leq \frac{c_1}{\epsilon^4}\log\pa{\frac{c_2}{\algdelta}} \left[ c_3\log\pa{\frac{c_4}{\epsilon}} \right]^{ \log_2\pa{c_5 \pa{ \log\pa{\frac{c_2}{\algdelta}}}}} 
		 = \cOtilde{ \frac{1}{\epsilon^{4}}}\CommaBin 
	\end{align*}
	where $c_1, c_2, c_3, c_4$, and $c_5$ are constants that depend only on $K$, $L$, and $\gamma$. 
\end{restatable}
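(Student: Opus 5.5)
Since the algorithm is non-adaptive, the number of oracle calls made by $\estV(s,\epsilon)$ depends only on $\epsilon$. Write $n(\epsilon)$ for it. Likewise, the number made by $\estQ(s,\epsilon)$ is $KN(\epsilon)(1+n(\epsilon/\sqrt{\gamma}))$. Set $N(\epsilon) = \ceil{\beta/\epsilon^2}$ with $\beta \triangleq \frac{18(1+\supFsZero)^2}{(1-\gamma)^4(1-\sqrt{\gamma})^2}\log(2K/\algdelta)$. The plan is to solve the resulting recursion
\begin{align*}
n(\epsilon) = \begin{cases} 0, & \epsilon \ge (1+\supFsZero)/(1-\gamma),\\
K N(\epsilon)\bigl(1+n(\epsilon/\sqrt{\gamma})\bigr), & \barepsilon\le\epsilon<(1+\supFsZero)/(1-\gamma),\\
K N(\sqrt{\barepsilon\epsilon})\bigl(1+n(\sqrt{\barepsilon\epsilon}/\sqrt{\gamma})\bigr) + 1 + n(\epsilon/\sqrt{\gamma}), & \epsilon<\barepsilon,\end{cases}
\end{align*}
and then bound the total by $\samcomplex{\epsilon}{\algdelta} = KN(\epsilon)(1+n(\epsilon/\sqrt\gamma))$.

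\textbf{Step 1: large $\epsilon$.} For $\epsilon\ge\barepsilon$, I unroll the recursion. Only $H_0=\cO{\log_{1/\sqrt\gamma}(1/((1-\gamma)\barepsilon))}$ levels occur before $\epsilon$ exceeds the cutoff, so $n(\epsilon)\le \prod_{h}(2K\beta\gamma^{-h}/\epsilon^2)$ is bounded by a quantity $C_0(\beta)$ that depends only on $K,L,\gamma$ and polynomially on $\beta$, i.e.\ on $\log(1/\algdelta)$. The degree of this polynomial is $H_0$, and $H_0$ is a constant.

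\textbf{Step 2: small $\epsilon$.} For $\epsilon<\barepsilon$, the key structural observation is that the first term calls $n$ at $\sqrt{\barepsilon\epsilon/\gamma}$, which is much larger than $\epsilon$. The second term, $n(\epsilon/\sqrt\gamma)$, is a chain of length $\cO{\log(\barepsilon/\epsilon)}$ that ends in the regime $\epsilon\ge\barepsilon$. Unrolling this chain gives
\[
n(\epsilon)\le \sum_{j=0}^{J}\Bigl[2K\beta/(\barepsilon\epsilon\gamma^{-j}) \cdot \bigl(1+n(\sqrt{\barepsilon\epsilon\gamma^{-j-1}})\bigr)+1\Bigr], \qquad J=\cO{\log(1/\epsilon)}.
\]
I then make the inductive ansatz $n(\epsilon)\le \frac{A}{\epsilon^{2}}\,[c\log(c'/\epsilon)]^{m(\epsilon)}$. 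Here $m(\epsilon)$ counts the number of square-root steps $\epsilon\mapsto\sqrt{\barepsilon\epsilon}$ needed to reach $\barepsilon$, which is $\cO{\log_2\log(\barepsilon/\epsilon)}$.

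\textbf{Step 3: closing the induction.} Substitute the ansatz: $\frac{1}{\barepsilon\epsilon}\cdot\frac{A}{\barepsilon\epsilon}=\frac{A}{\barepsilon^2\epsilon^2}$. The $\gamma^{j}$ factors form a geometric sum, since $\gamma^{j}\cdot\gamma^{j+1}$ decays. Each level therefore multiplies the bound by a factor of order $\beta\log(1/\epsilon)$, and over $m$ levels the exponent of the logarithm grows by one per level. This structure is what produces the factor $[c_3\log(c_4/\epsilon)]^{\log_2(\cdot)}$.

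\textbf{Main obstacle.} The delicate point is matching the stated exponent $\log_2(c_5\log(c_2/\algdelta))$ rather than $\log_2\log(1/\epsilon)$. This requires arguing that the $\beta$ factor picked up at each level is absorbed differently. Roughly, once $\epsilon$ is small compared with powers of $1/\beta$, the $\beta/\epsilon$ terms dominate, and only about $\log_2\log\beta$ levels contribute a non-trivial multiplicative factor.

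I would try to handle this by choosing the threshold $\epsilon^\ast\approx\barepsilon/\beta^{c}$. Square-root iteration from any $\epsilon$ reaches $[\epsilon^\ast,\barepsilon]$ in $\log_2(c_5\log\beta)$ steps after the point where the sizes become comparable. Below $\epsilon^\ast$, the extra factor per level should be absorbable into the $1/\epsilon^{2}$ slack. The final bound then follows by multiplying by the outer $KN(\epsilon)=\cO{\beta/\epsilon^2}$, giving the $\epsilon^{-4}\log(c_2/\algdelta)$ prefactor.
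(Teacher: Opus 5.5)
Your recursion and Step~1 are fine, but Step~3 and the proposed fix for the ``main obstacle'' do not work. When you substitute the ansatz into the unrolled chain, the $j$-th term scales like $\epsilon_j^{-2}=\gamma^{j}\epsilon^{-2}$ (with $\epsilon_j=\epsilon\gamma^{-j/2}$), so the sum over $j$ is geometric. Each square-root level $\epsilon\mapsto\sqrt{\barepsilon\epsilon/\gamma}$ therefore costs a multiplicative factor that is a \emph{constant}, $F=\frac{2\gamma K\beta}{(1-\gamma)\barepsilon^{2}}$ in your notation (the $2$ absorbs ceilings and the $+1$'s), not $\beta\log(1/\epsilon)$. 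Putting $c\log(c'/\epsilon)$ in the base of $[\,\cdot\,]^{m(\epsilon)}$ creates a spurious factor $[\log(c'/\epsilon)]^{m(\epsilon)}\approx(\log(1/\epsilon))^{\log_2\log(1/\epsilon)}$. That factor is not polylogarithmic and not of the stated form. The threshold argument cannot remove it, for two reasons. First, there is no polynomial slack: $n(\epsilon)\ge KN(\sqrt{\barepsilon\epsilon})\,n(\sqrt{\barepsilon\epsilon/\gamma})$ forces any exponent $p$ in $n(\epsilon)\approx\epsilon^{-p}$ to satisfy $p\ge 1+p/2$, so $p=2$ is tight. Second, it is false that only about $\log_2\log\beta$ levels contribute. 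All $m(\epsilon)\approx\log_2\log(\barepsilon/\epsilon)$ levels contribute the same factor $F$, and this count does not depend on $\beta$.

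The missing observation is that the exponent in the statement is $\log_2 F$, not a number of levels. One checks that $F=c_5\log(c_2/\algdelta)$ exactly, and $F^{\log_2 x}=x^{\log_2 F}$ with $x=\log_{1/\gamma}(\barepsilon/(\gamma\epsilon))$. So your $\beta^{m(\epsilon)}$ factor already has the right shape; only the logarithm in the base is superfluous.

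The paper builds this in from the start with the fixed-exponent ansatz $B(\epsilon)=\eta_1\epsilon^{-2}[\log_{1/\gamma}(\barepsilon/(\gamma\epsilon))]^{\eta_2}$, $\eta_2=\log_2 F$. It inducts on $h$ directly on $n(\epsilon)\le n(\epsilon/\sqrt{\gamma})+\frac{2K\beta}{\barepsilon\epsilon}\,n(\sqrt{\barepsilon\epsilon/\gamma})$, without unrolling the chain. The first term is at most $\gamma B(\epsilon)$, because of the $\epsilon^{-2}$ and the decreasing logarithm. For the second term, the identity $\log\bigl(\barepsilon/(\gamma\sqrt{\barepsilon\epsilon/\gamma})\bigr)=\frac12\log\bigl(\barepsilon/(\gamma\epsilon)\bigr)$ shrinks the logarithmic factor by exactly $2^{-\eta_2}=1/F$, so that term equals $(1-\gamma)B(\epsilon)$. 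Then $\gamma+(1-\gamma)=1$ closes the induction.

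Equivalently, your variable-exponent route goes through with the ansatz $n(\epsilon)\le A\epsilon^{-2}F^{m(\epsilon)}$, where $m(\epsilon)=\lceil\log_2\log_{1/\gamma}(\barepsilon/(\gamma\epsilon))\rceil$ drops by exactly one under the square-root step.
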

The proof of Theorem~\ref{thm:sample-complexity} with the exact constants is in the appendix.  In Theorem~\ref{thm:consistency}, we  provide our consistency result, stating that the output of \ouralgo applied to a state $\svar\in\states$ is a good approximation of $\Value{\svar}$ with high probability.

\begin{restatable}{theorem}{thmpac}\label{thm:consistency}
	For any $\svar\in\states$, $\epsilon > 0,$ and $\pacdelta > 0$, there exists a $\algdelta$ that depends on $\epsilon$ and $\pacdelta$ such that the output $\ValueH{\svar}$ of $\ouralgo(s, \epsilon, \algdelta)$ satisfies
	\[\PP{\big\lvert\ValueH{\svar} - \Value{\svar}\big\rvert > \epsilon} \le \pacdelta.\]
	and such that $\samcomplex{\epsilon}{\algdelta} = \cO{ 1/\epsilon^{4+c}}$ for any $c > 0$.
\end{restatable}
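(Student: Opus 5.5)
The plan is to define a single "good" event $\auxevent$ on which every call to $\estQ$ made during the execution is accurate. On $\auxevent$, the guarantee for the final output follows from the fact that $\stateFunc{s}$ is $1$-Lipschitz in $\norm{\cdot}_\infty$ (its gradient is a probability vector). The probability of the complement of $\auxevent$ is controlled by a union bound, which is where $\algdelta$ is tuned.

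\textbf{Accuracy claim and bias recursion.} Say that a call $\estQ(s,\eps)$ \emph{succeeds} if its output satisfies $\norm{\QFuncHat_s-\QFunc_s}_\infty\le\eps$. I would prove the following by induction on the recursion, going from the largest precisions ($\eps\ge(1+\supFsZero)/(1-\gamma)$, where the output $0$ is trivially fine) down to the smaller ones. If all $\estQ$ calls inside the subtree of a call $\estV(s,\eps)$ succeed, then:
\begin{enumerate}[(i)]
\item the output of $\estV(s,\eps)$ lies in a fixed bounded interval whose length is of order $(1+\supFsZero)/((1-\gamma)(1-\sqrt\gamma))$;
\item its conditional mean, given $\QFuncHat_s$, is within $\eps$ of $\Value{s}$.
\end{enumerate}
For the middle regime $\barepsilon\le\eps$ this is just Lipschitzness of $\stateFunc{s}$. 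For the smooth regime $\eps<\barepsilon$, I would split the error as
\[
\stateFunc{s}(\QFuncHat_s)+(\QFunc_s-\QFuncHat_s)\transpose\nabla\stateFunc{s}(\QFuncHat_s)-\stateFunc{s}(\QFunc_s),
\]
which by $L$-smoothness is at most $L\norm{\QFunc_s-\QFuncHat_s}_2^2\le LK\barepsilon\eps=(1-\sqrt\gamma)\eps$, plus the bias of $\rewardrand+\gamma\hat V$ as an estimate of $\QFunc_s(A)$. The latter is at most $\gamma\cdot\eps/\sqrt\gamma=\sqrt\gamma\,\eps$ by induction. The two terms sum to exactly $\eps$, which is why $\barepsilon$ was chosen as it is.

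\textbf{Hoeffding step.} Given the bias and range bounds for the inner calls $\estV(\transrand,\eps/\sqrt\gamma)$, each $q_i$ in $\estQ(s,\eps)$ is a bounded variable with mean within $\sqrt\gamma\,\eps$ of $\QFunc_s(a)$. Hoeffding's inequality with the stated $N(\eps)$ (the factor $(1-\sqrt\gamma)^{-2}$ gives exactly the needed slack) makes the empirical mean deviate by at most $(1-\sqrt\gamma)\eps$, with probability at least $1-\algdelta/K$ per action. Clipping only helps, so $\estQ$ succeeds with probability at least $1-\algdelta$.

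\textbf{Main obstacle: dependence.} The bias statement holds only on the good event, and conditioning on success of inner calls destroys independence of the $q_i$. I would handle this without conditioning. I would prove the unconditional bound
\[
\abs{\EE{\text{output}}-\Value{s}}\le \eps+ B\,\PP{\text{some inner }\estQ\text{ fails}},
\]
with $B$ the range bound, which requires showing that the range is bounded always (via the clipping in $\estQ$). The $q_i$ are independent given the oracle, so Hoeffding applies directly. The extra failure term is of order $B\cdot n\,\algdelta$, and I would absorb it by slightly shrinking the target (e.g.\ running the analysis with $\eps/2$ in the slack) once $\algdelta$ is small.

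\textbf{Choice of $\algdelta$ and complexity.} Finally, the number of $\estQ$ calls is at most $\samcomplex{\eps}{\algdelta}$, so a union bound gives $\PP{\auxevent^c}\le \samcomplex{\eps}{\algdelta}\,\algdelta$. By Theorem~\ref{thm:sample-complexity}, $\samcomplex{\eps}{\algdelta}$ depends on $\algdelta$ only through $\log(c_2/\algdelta)$ and $[c_3\log(c_4/\eps)]^{\log_2(c_5\log(c_2/\algdelta))}$. Choosing $\algdelta=\pacdelta\,\eps^{4+c'}/\mathrm{poly}\log$ (or solving $\algdelta\,\samcomplex{\eps}{\algdelta}\le\pacdelta$ directly) makes the union bound at most $\pacdelta$. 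With this choice, $\log(1/\algdelta)=\cO{\log(1/\eps)}$, so the correction factor is $(\log(1/\eps))^{\cO{\log\log(1/\eps)}}=\cO{\eps^{-c}}$ for every $c>0$. This gives $\samcomplex{\eps}{\algdelta}=\cO{1/\eps^{4+c}}$.
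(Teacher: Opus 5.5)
Your route is genuinely different from the paper's. The paper works \emph{conditionally}. It defines recursively the good event $\event(s,\eps)$: the $\estQ$ output at $s$ is $\zeta(\eps)$-accurate and all child events hold. It proves by induction on the precision level that $\abs{\EE{\estV(s,\eps)\mid\event(s,\eps)}-\Value{s}}\le\eps$ and $\PP{\event(s,\eps)}\ge 1-\algdelta\,\nsamplev(\eps,\algdelta)$. It gets independence of the $q_i$ given $\subevent(s,\eps)$ from a characteristic-function factorization, because the conditioning event is an intersection of events each measurable with respect to one independent subtree. So conditioning does not by itself destroy independence, and with conditioning the exact constants in $N(\cdot)$ and $\barepsilon$ close with no slack.

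You instead use unconditional expectations, relying on clipping to make every $\estV$ output surely bounded, so the $q_i$ are genuinely i.i.d. This sidesteps the conditional bookkeeping, including the extra call at $Z_{s,A}$, whose law depends on $\QFuncHat_s$. It also buys something: the final claim only needs the top-level $\estQ(s,\eps)$ to succeed, so the factor $\samcomplex{\eps}{\algdelta}$ in the failure probability is unnecessary (your union bound is valid but loose). Your final choice of $\algdelta$ is the same idea as the paper's $\algdelta=\eps^5$.

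The step that is not yet a proof is the absorption of the extra bias. As you note yourself, both budgets are exact: $\sqrt\gamma\,\eps+(1-\sqrt\gamma)\eps$ in the Taylor split, and $N(\cdot)$ is calibrated so the Hoeffding deviation is exactly $(1-\sqrt\gamma)\zeta$ at level $\algdelta/K$ for range $2C_\gamma$. So as set up there is no room for an additive term $E$. Your phrase ``running the analysis with $\eps/2$'' either changes the input of $\ouralgo$, which is not allowed, or shrinks the Hoeffding deviation. In the latter case the per-call failure probability is no longer $\algdelta/K$, and your final union bound with $\algdelta$ per call becomes inconsistent. You must say where the slack comes from. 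Either of two fixes works.
\begin{enumerate}[(a)]
\item The $q_i$ surely lie in an interval of length at most $1+2\gamma(3-\gamma)(1+\supFsZero)/(1-\gamma)^2\le\tfrac23\cdot 2C_\gamma$. So Hoeffding at level $\algdelta/K$ uses only $\tfrac23(1-\sqrt\gamma)\zeta$ of the budget, leaving $\tfrac13(1-\sqrt\gamma)\zeta$ free.
\item Use deviation $(1-\theta)(1-\sqrt\gamma)\zeta$ and accept failure $2(\algdelta/2K)^{(1-\theta)^2}$ per action. This is harmless, since $\algdelta$ is yours to choose and $\samcomplex{\eps}{\algdelta}$ depends on it only through $\log(1/\algdelta)$.
\end{enumerate}

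Also state the invariant you actually induct on. For every state $z$ and level $e$, require $\abs{\estV(z,e)}\le(3-\gamma)(1+\supFsZero)/(1-\gamma)^2$ surely and $\abs{\EE{\estV(z,e)}-\Value{z}}\le e+E$ for a single constant $E$. Uniformity in $z$ is what lets you apply the tower property at the random state $\transrand$, whose subtree is fresh given $\transrand$. Let $p$ be the per-call failure probability and $B'=2(1+\supFsZero)/(1-\gamma)$ a sure bound on the residual $\stateFunc{s}(\QFuncHat_s)+(\QFunc_s-\QFuncHat_s)\transpose\nabla\stateFunc{s}(\QFuncHat_s)-\stateFunc{s}(\QFunc_s)$. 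The recursion $E\leftarrow B'p+\gamma E$ then gives $E\le B'p/(1-\gamma)$, rather than a term of order $B\,\samcomplex{\eps}{\algdelta}\,\algdelta$. Every $\zeta$ in the tree is at least $\eps$, so the induction closes once $\gamma E$ fits in the slack, i.e.\ once $\algdelta$ is at most a constant times $\eps$. The conditional claim (ii) in your first paragraph should be dropped in favor of this unconditional invariant.
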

\noindent

More precisely, in the proof of Theorem~\ref{thm:consistency}, we establish that \[\PP{\big\lvert\ValueH{\svar} - \Value{\svar}\big\rvert > \epsilon} \le \algdelta \samcomplex{\epsilon}{\algdelta}.\] Therefore, for any parameter $\algdelta$ satisfying $\algdelta \samcomplex{\epsilon}{\algdelta} \leq \pacdelta$, \ouralgo{} with parameters $\epsilon$ and $\algdelta$ provides an approximation of $V(s)$ which is $(\epsilon,\pacdelta)$ correct.

\paragraph{Impact of regularization constant} For a regularization constant $\lambda$, the smoothness constant is $L = 1/\lambda$. in Theorem \ref{thm:sample-complexity} we did not make the dependence on $L$ explicit to preserve simplicity. However, it easy to analyze the sample complexity in the two limits:
\begin{description}
    \item[\hspace{3cm} strong regularization] $L \to 0$ and  $\stateFunc{s}$ is linear 
    \item[\hspace{3cm} no regularization] $L \to \infty$ and $\stateFunc{s}$ is not smooth
\end{description}

As $L \to 0$, the condition $\barepsilon \leq \epsilon \leq (1+\lambda\log K)/(1-\gamma)$ will be met less and eventually the algorithm will sample $N = \cO{1/\epsilon^2}$ trajectories, which implies a sample complexity of order $\cO{1/\epsilon^2}.$ On the other hand, as $L$ goes to $\infty$, the condition $\epsilon < \barepsilon$ will be met less and the algorithm eventually runs a uniform sampling strategy of \citet{kearns1999sparse}, which results in a sample complexity of order $\cO{(1/\epsilon)^{\log(1/\epsilon)}}$, which is non-polynomial in $1/\epsilon$. 

Let $\ValueFunc_\lambda(s)$ be the entropy regularized value function and $\ValueFunc_0(s)$ be its non-regularized version. Since $\stateFunc{s}$ is 1-Lipschitz and $\normm{\logsumexp{\lambda} - \max}_\infty \leq \lambda\log K$, we can prove that $\sup_s|\ValueFunc_\lambda(s) - \ValueFunc_0(s) | \leq \lambda \log K/(1-\gamma)$.  Thus, we can interpret $\ValueFunc_\lambda(s)$ as an approximate value function which we can estimate faster.

\paragraph{Comparison to lower bound} For non-regularized problems, \cite{kearns1999sparse} prove a sample complexity lower bound of $\Omega\pa{(1/\epsilon)^{1/\log(1/\gamma)}}$, which is polynomial in $1/\epsilon$, but its exponent grows as $\gamma$ approaches $1$. For regularized problems, Theorem \ref{thm:sample-complexity} shows that the sample complexity is polynomial with an exponent that is \emph{independent} of $\gamma$. Hence, when $\gamma$ is close to 1, regularization gives us a better asymptotic behavior with respect to $1/\epsilon$ than the lower bound for the non-regularized case, although we are not estimating the same value.

\section{Generalization of SmoothCruiser}
\label{generalizations}

Consider the general definition of value functions in Equation \ref{eq:general-value-function}. Although we focused on the case where $\stateFunc{s}$ is the $\logsumexp{}$ function, which arises as a consequence of entropy regularization, our theoretical results hold for any set of functions $\{\stateFunc{s}\}_{s \in \states}$ that  for any $s$ satisfy the following conditions:

\begin{enumerate}
    \item $\stateFunc{s}$ is differentiable
	\item  $\forall Q \in \range^K, 0 < \normm{\nabla\stateFunc{\svar}(Q)}_1 \leq 1$ 
	\item (nonnegative gradient) $\forall Q \in \range^K, \nabla\stateFunc{\svar}(Q) \succeq 0$ 
	\item ($\Lb$-smooth) there exists $\Lb\ge 0$ such that for any $Q, Q' \in \range^K$
	\[\abs{\stateFunc{\svar}(Q) - \stateFunc{\svar}(Q') - (Q-Q')\transpose \nabla\stateFunc{\svar}(Q')}
	\le \Lb\norm{Q-Q'}_2^2\]
\end{enumerate}

For the more general definition above, we need to make two simple modifications of the procedure $\estV.$  When $\epsilon < \barepsilon$, the action $A$ in $\estV$ is sampled according to \[A \sim \frac{\nabla\stateFunc{\svar}(\QFuncHat_s)}{\normm{\nabla\stateFunc{\svar}(\QFuncHat_s)}_1}\] 
and its output is modified to 
\[\stateFunc{\svar}(\QFuncHat_s) - \QFuncHat_s\transpose\nabla\stateFunc{\svar}(\QFuncHat_s) + (\rewardrand + \gamma\hat{v}) \normm{\nabla\stateFunc{\svar}(\QFuncHat)}_1. \]

In particular, \ouralgo can be used for more general regularization schemes, as long as the Bellman operators satisfy the assumptions above. One such example is presented in Appendix~\ref{sec:other-smooth-approx}.

\section{Conclusion}

We provided \ouralgo, an algorithm that estimates the value function of MDPs and discounted games defined through smooth approximations of the optimal Bellman operator, which is the case in entropy-regularized value functions. More generally, our algorithm can also be used when value functions are defined through \textit{any} smooth Bellman operator with nonnegative gradients. We showed that our algorithm has a polynomial sample complexity of $\tilde{\mathcal{O}}(1/\epsilon^{4})$, where $\epsilon$ is the desired precision. This guarantee is problem independent and holds for \textit{state spaces of arbitrary cardinality.} 

One interesting interpretation of our results is that computing entropy-regularized value functions, which are commonly employed for reinforcement learning, can be seen as a smooth relaxation of a planning problem for which we can obtain a much better sample complexity in terms of the required precision $\epsilon$. Unsurprisingly, when the regularization tends to zero, we recover the well-known non-polynomial bound $\cO{(1/\epsilon)^{\log(1/\epsilon)}}$ of \cite{kearns1999sparse}. Hence, an interesting direction for future work is to study adaptive regularization schemes in order to accelerate planning algorithms. Although \ouralgo makes large amount of recursive calls, which makes it impractical in most situations, we believe it might help us to understand how regularization speeds planning and inspire more practical algorithms. This might be possible by exploiting its similarities to Monte-Carlo tree search that we have outlined above.

\paragraph{Acknowledgments}
The research presented was supported by European CHIST-ERA project DELTA, French Ministry of
Higher Education and Research, Nord-Pas-de-Calais Regional Council,
Inria and Otto-von-Guericke-Universit\"at Magdeburg associated-team north-European project Allocate, and French National Research Agency project BoB (grant n.ANR-16-CE23-0003), 
FMJH Program PGMO with the support of this program from Criteo.

\newpage
\bibliography{library,b,ref}
\bibliographystyle{plainnat}

\newpage

\appendix

\section{Preliminaries}

\subsection{General definition of value functions}

We consider the general definition of value functions in Equation \ref{eq:general-value-function} and we assume that all the functions $\stateFunc{s}$ satisfy

\begin{enumerate}
    \item $\stateFunc{s}$ is differentiable,
   \item  $\forall x \in \range^K, 0 < \normm{\nabla\stateFunc{\svar}(x)}_1 \leq 1,$ 
   \item (nonnegative gradient) $\forall x \in \range^K, \nabla\stateFunc{\svar}(x) \succeq 0,$ 
   \item ($\Lb$-smooth) There exists $\Lb\ge 0$ such that for any $x_0, x \in \range^K,$
   \[\abs{\stateFunc{\svar}(x) - \stateFunc{\svar}(x_0) - (x-x_0)\transpose \nabla\stateFunc{\svar}(x_0)}
   \le \Lb\norm{x-x_0}_2^2,\]
\end{enumerate}

which is the case for the functions $\logsumexp{\lambda}$ and $-\logsumexp{-\lambda}$ that we study in the present paper. In particular, the second requirement implies that $\stateFunc{s}$ is 1-Lipschitz,
\[
    \forall x, y \in \range^K, \abs{F_s(x)-F_s(y)}\leq\norm{x-y}_\infty.
\]
For this more general definition, we modify the output of $\estV$ when $\epsilon < \barepsilon$ to
\[
     \mathbf{output} = \stateFunc{\svar}\pa{\QFuncHat_s} - (\QFuncHat_s)\transpose\nabla\stateFunc{\svar}\pa{\QFuncHat_s } + (\rewardrand + \gamma\hat{v}) \norm{\nabla\stateFunc{\svar}\pa{\QFuncHat }}_1
\]
and the action sampled in $\estV$ is sampled according to
\[
    A \sim \frac{\nabla\stateFunc{\svar}\pa{\QFuncHat_s}}{\norm{\nabla\stateFunc{\svar}\pa{\QFuncHat_s}}_1}
\quad \text{instead of} \quad A \sim \nabla\stateFunc{\svar}\pa{\QFuncHat_s}.
\]

\subsection{Other definitions}
The constant $\supFsZero$ is defined as
\[
    \supFsZero \triangleq \sup_{s\in\states} |\stateFunc{\svar}(0)|.
\]
For any $c \in \R$, the function $\mathrm{clip}_c: \R^d \to \R^d$ is defined component-wise as 
\begin{align*}
    \clip{c}{x}_i = %
    \begin{cases}
        0 \; & \mbox{ if } x_i \leq 0, \\
        x_i \; & \mbox{ if } - c < x_i < c,\\
        c\; &  \mbox{ if } x \geq c.
    \end{cases}
\end{align*}
\section{Sample complexity}
\thmsample*
To bound the sample complexity, we make the following steps.
\begin{itemize}
   \item Proposition~\ref{prop:sample-complexity-large-epsilon} bounds the number of recursive calls of $\estV$ in the uniform sampling phase ($\epsilon \geq \barepsilon$) and is similar to the results of \cite{kearns1999sparse}.
   \item Lemma~\ref{lemma:sample-complexity} bounds the number of recursive calls of $\estV$ when $\epsilon <\barepsilon.$
   \item By noticing that the number of recursive calls of $\estV$ is equal to the number of oracle calls, we bound the sample complexity of $\ouralgo$ in Theorem~\ref{thm:sample-complexity}.
\end{itemize}

Let $\nsamplev(s, \epsilon,\algdelta)$ be the total number of recursive calls to $\estV$ after an initial call with parameters $(s, \epsilon)$, and including the initial call. Since this number does not depend on the state $s$, we denote it by $\nsamplev(\epsilon,\algdelta)$. 

\begin{proposition}
   \label{prop:sample-complexity-large-epsilon}
   Let $\epsilon \geq \barepsilon$. For all $h \in \N$, $\forall \epsilon $ such that $\frac{(1+M_\lambda)\sqrt{\gamma}^h}{1-\gamma} \leq \epsilon \leq \frac{1+M_\lambda}{1-\gamma}$, we have 
   \begin{align*}
      \nsamplev(\epsilon,\algdelta) & \leq \gamma^{\frac{1}{2}H(\epsilon)(H(\epsilon)-1)}\pa{\frac{2\alpha(\algdelta)}{\epsilon^2}}^{H(\epsilon)} \\
      & \leq \gamma^{\frac{1}{2}H(\barepsilon)(H(\barepsilon)-1)}\pa{\frac{2\alpha(\algdelta)}{\barepsilon^2}}^{H(\barepsilon)} 
   \end{align*}
   where 
   \begin{align*}
      H(\epsilon) = \left\lceil2\log_\gamma \pa{\frac{\epsilon(1-\gamma)}{1+M_\lambda}} \right\rceil
   \end{align*}
   and 
   \begin{align*}
      \alpha(\algdelta) = \frac{18(1+\supFsZero)^2 K}{(1-\gamma)^4 (1-\sqrt{\gamma})^2} \log\pa{ \frac{2K}{\algdelta} }
   \end{align*}
   
\end{proposition}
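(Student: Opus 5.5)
I will prove the bound by induction on $h$, i.e., on how many multiplications by $1/\sqrt{\gamma}$ are needed before the accuracy parameter exceeds the termination threshold $(1+M_\lambda)/(1-\gamma)$.

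\textbf{Recursion.} For $\epsilon \ge \barepsilon$ the call to $\estV(s,\epsilon)$ either returns $0$, or calls $\estQ(s,\epsilon)$ once. That call issues $K N(\epsilon)$ recursive calls, each to $\estV(\cdot,\epsilon/\sqrt{\gamma})$, and every such argument is again $\ge \barepsilon$. Since the count does not depend on the state, this gives
\[
\nsamplev(\epsilon,\algdelta) \le 1 + K N(\epsilon)\,\nsamplev(\epsilon/\sqrt{\gamma},\algdelta),
\qquad \nsamplev(\epsilon,\algdelta)=1 \text{ if } \epsilon \ge \tfrac{1+M_\lambda}{1-\gamma}.
\]
Using $\lceil x\rceil \le 2x$ for $x\ge 1$, and checking that the argument of the ceiling is $\ge 1$ because $\epsilon \le (1+M_\lambda)/(1-\gamma)$ and the constant $18/(1-\gamma)^2(1-\sqrt\gamma)^2$ is large, I get $K N(\epsilon) \le \alpha(\algdelta)/\epsilon^2$. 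I then absorb the additive $1$ into a factor $2$, which is why the target bound has $2\alpha$. The resulting one-step inequality is
\[
\nsamplev(\epsilon,\algdelta) \le \frac{2\alpha(\algdelta)}{\epsilon^2}\,\nsamplev(\epsilon/\sqrt\gamma,\algdelta),
\]
valid whenever $\epsilon$ is below the threshold. This step needs $\alpha/\epsilon^2 \ge 1$ and $\nsamplev \ge 1$.

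\textbf{Unrolling.} Let $h$ be the number of steps until termination. Since $\epsilon\sqrt{\gamma}^{-h} \ge (1+M_\lambda)/(1-\gamma)$, it suffices that $h \ge 2\log_\gamma\bigl(\epsilon(1-\gamma)/(1+M_\lambda)\bigr)$, so $h = H(\epsilon)$ steps suffice. Unrolling the one-step inequality gives
\[
\nsamplev(\epsilon,\algdelta) \le \prod_{j=0}^{H(\epsilon)-1} \frac{2\alpha(\algdelta)}{\epsilon^2 \gamma^{-j}} = \gamma^{\sum_{j=0}^{H-1} j}\Bigl(\frac{2\alpha(\algdelta)}{\epsilon^2}\Bigr)^{H} = \gamma^{\frac12 H(H-1)}\Bigl(\frac{2\alpha(\algdelta)}{\epsilon^2}\Bigr)^{H},
\]
which is the first inequality. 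Formally I will run an induction on $h$ with the hypothesis stated over the ranges $\frac{(1+M_\lambda)\sqrt\gamma^{h}}{1-\gamma}\le\epsilon$, so the hypothesis applies to $\epsilon/\sqrt\gamma$. I also need $H(\epsilon/\sqrt\gamma) = H(\epsilon)-1$, which follows from the definition via the ceiling.

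\textbf{Monotonicity.} For the second inequality I show the bound is nonincreasing in $\epsilon$ on $[\barepsilon, (1+M_\lambda)/(1-\gamma)]$. The cleanest route is to note that $\nsamplev(\epsilon,\algdelta)$ itself is monotone: a smaller $\epsilon$ gives larger $N$ and a deeper recursion. Then I apply the first inequality at $\barepsilon$. If I instead try to compare the closed-form expressions directly, the factor $\gamma^{H(H-1)/2}$ decreases in $H$ while $(2\alpha/\epsilon^2)^H$ increases, so that comparison needs care.

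\textbf{Main obstacle.} I expect the monotonicity of the closed form to be the delicate step. I would handle it by induction over the recursion, comparing $\nsamplev(\epsilon)$ and $\nsamplev(\barepsilon)$ level by level: at each level $\epsilon/\sqrt\gamma^{j} \ge \barepsilon/\sqrt\gamma^{j}$, and both the factor $N$ and the remaining depth are monotone in $\epsilon$. The bookkeeping for the off-by-one in the ceiling within $H$ is the remaining routine check.
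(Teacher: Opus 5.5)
Your plan matches the paper's proof: induction on $h$ through the recursion $\nsamplev(\epsilon,\algdelta)=1+KN(\epsilon)\,\nsamplev(\epsilon/\sqrt{\gamma},\algdelta)$, the exact shift $H(\epsilon/\sqrt{\gamma})=H(\epsilon)-1$, and the product $\prod_{j=0}^{H(\epsilon)-1}2\alpha(\algdelta)\gamma^{j}/\epsilon^{2}$. However, the step that should produce the per-level factor $2\alpha(\algdelta)/\epsilon^2$ is wrong as written. Since $N(\epsilon)=\ceil{\alpha(\algdelta)/(K\epsilon^2)}$, you always have $KN(\epsilon)\ge\alpha(\algdelta)/\epsilon^2$, with strict inequality unless $\alpha(\algdelta)/(K\epsilon^2)$ is an integer. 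So your claim $KN(\epsilon)\le\alpha(\algdelta)/\epsilon^2$ is false, and the tool you cite, $\ceil{x}\le 2x$, only gives $KN(\epsilon)\le 2\alpha(\algdelta)/\epsilon^2$. The factor $2$ is then already used up on the ceiling, and absorbing the additive $1$ costs another. Writing $m:=\nsamplev(\epsilon/\sqrt{\gamma},\algdelta)$, you only get $1+KN(\epsilon)m\le(1+2\alpha(\algdelta)/\epsilon^2)m\le 3\alpha(\algdelta)m/\epsilon^2$. After unrolling, this gives a bound larger than the stated one by a factor $(3/2)^{H(\epsilon)}$.

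The repair is to use $\ceil{x}\le x+1$ instead. Then $1+KN(\epsilon)\le 1+K+\alpha(\algdelta)/\epsilon^2\le 2\alpha(\algdelta)/\epsilon^2$, where the last step needs $\alpha(\algdelta)/\epsilon^2\ge K+1$. This holds on the whole range $\epsilon\le(1+\supFsZero)/(1-\gamma)$, because there $\alpha(\algdelta)/\epsilon^2\ge 18K\log(2K/\algdelta)/\bigl((1-\gamma)^2(1-\sqrt{\gamma})^2\bigr)\ge 18K\log 2\ge K+1$ once $\algdelta\le 1$. Together with $m\ge 1$, this gives exactly the one-step inequality the paper writes down without justification.

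For the second inequality, write $G(\epsilon)$ for the middle expression. Your monotonicity of $\nsamplev(\cdot,\algdelta)$ on $[\barepsilon,\infty)$ is true (by induction on depth, since $N$ is nonincreasing) and gives $\nsamplev(\epsilon,\algdelta)\le\nsamplev(\barepsilon,\algdelta)\le G(\barepsilon)$. That is all that is used downstream, through $\eta_1$ and $c_1$, but it is not the literal claim $G(\epsilon)\le G(\barepsilon)$. That claim needs no induction over the recursion:

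\textbf{Within pieces.} On each interval where $H$ is constant, $G$ is nonincreasing in $\epsilon$.

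\textbf{At breakpoints.} Let $\epsilon_h=(1+\supFsZero)\sqrt{\gamma}^{h}/(1-\gamma)$, so $H(\epsilon_h)=h$ and $H=h+1$ on $[\epsilon_{h+1},\epsilon_h)$. For every $\epsilon$ in that interval, $G(\epsilon)/G(\epsilon_h)\ge\gamma^{h}\,2\alpha(\algdelta)/\epsilon_h^{2}=2\alpha(\algdelta)(1-\gamma)^2/(1+\supFsZero)^2\ge 1$.

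Hence $G$ is nonincreasing on $[\barepsilon,(1+\supFsZero)/(1-\gamma)]$. The paper leaves this second inequality unproved, so supplying this argument completes the statement.
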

\begin{proof}
   We want to prove that $\nsamplev(\epsilon,\algdelta) \leq G(\epsilon)$, where
   
   \begin{align*}
      G(\epsilon) = \gamma^{\frac{1}{2}H(\epsilon)(H(\epsilon)-1)}\pa{\frac{2\alpha(\algdelta)}{\epsilon^2}}^{H(\epsilon)} 
   \end{align*}
   
   We proceed by induction on $h$.
   
   \paragraph{Base case} Let $h = 0$. We have $\epsilon = \frac{1+M_\lambda}{1-\gamma}$, which implies $\nsamplev(\epsilon,\algdelta) = 1$ and $G(\epsilon) = 1$ (since $H(\epsilon)=0$). Hence, the proposition is true for $h=0$.
   \paragraph{Induction hypothesis} Assume true for $h$.
   \paragraph{Induction step} Let $\epsilon \geq \frac{(1+M_\lambda)\sqrt{\gamma}^{h+1}}{1-\gamma}\cdot$ Since $\frac{\epsilon}{\sqrt{\gamma}} \geq \frac{(1+M_\lambda)\sqrt{\gamma}^h}{1-\gamma}\CommaBin$ we use the induction hypothesis to obtain
   \begin{align*}
       \nsamplev(\epsilon,\algdelta) & = \underbrace{1}_{\mbox{current call}} + \underbrace{K N(\epsilon) \nsamplev\pa{\frac{\epsilon}{\sqrt{\gamma}}, \algdelta}}_{\mbox{calls in $\estQ$}} \\
      & \leq \frac{2\alpha(\algdelta)}{\epsilon^2} \nsamplev\pa{\frac{\epsilon}{\sqrt{\gamma}}, \algdelta} \\
      & \leq \frac{2\alpha(\algdelta)}{\epsilon^2} \gamma^{\frac{1}{2}(H(\epsilon)-1)(H(\epsilon)-2)}\pa{\frac{\gamma 2\alpha(\algdelta)}{\epsilon^2}}^{H(\epsilon)-1}, \quad \mbox{ since } H\pa{\frac{\epsilon}{\sqrt{\gamma}}} = H(\epsilon) - 1 \\
      & = \gamma^{\frac{1}{2}H(\epsilon)(H(\epsilon)-1)}\pa{\frac{2\alpha(\algdelta)}{\epsilon^2}}^{H(\epsilon)}\CommaBin
   \end{align*}
   which completes the proof. 
\end{proof}

\begin{lemma}
   \label{lemma:sample-complexity}
   Let $\epsilon \leq \barepsilon$. For all $h \in \N$, $\forall \epsilon \geq \barepsilon \sqrt{\gamma}^h$, we have 
   \begin{align*}
   \nsamplev(\epsilon,\algdelta) \leq \eta_1 \left[ \log_{\frac{1}{\gamma}}\pa{\frac{\barepsilon/\gamma}{\epsilon}} \right]^{\eta_2(\algdelta)} \frac{1}{\epsilon^2}
   \end{align*}
   where 
   \begin{align*}
      & \barepsilon = \frac{1-\sqrt{\gamma}}{KL} \\ 
      & \eta_1 = \barepsilon^2 \nsamplev(\barepsilon,\algdelta) \\
      & \eta_2(\algdelta) = \log_2\pa{ \frac{\gamma}{1-\gamma}  \frac{2\beta(\algdelta)}{\barepsilon} } \\
      & \beta(\algdelta) = \frac{18(1+\supFsZero)^2 K^2 L}{(1-\gamma)^4 (1-\sqrt{\gamma})^3} \log\pa{ \frac{2K}{\algdelta} } 
   \end{align*}
   under the condition that
   \begin{align}
      \log_2\pa{ \frac{\gamma}{1-\gamma} \frac{2\beta(\algdelta)}{\barepsilon} } \geq 0, \quad \mbox{i.e.}, \quad \beta(\algdelta) \geq \frac{(1-\gamma)(1-\sqrt{\gamma})}{2\gamma KL}
   \end{align}
   which is satisfied by choosing $\algdelta$ small enough. 
\end{lemma}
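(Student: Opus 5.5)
We argue by induction on $h$, in the same way as Proposition~\ref{prop:sample-complexity-large-epsilon}, using the recursion satisfied by $\nsamplev$ when $\epsilon < \barepsilon$. Write $\ell(\epsilon) \triangleq \log_{1/\gamma}\pa{\frac{\barepsilon/\gamma}{\epsilon}}$, so $\ell(\barepsilon)=1$ and $\ell(\epsilon) \ge 1$ for $\epsilon \le \barepsilon$.

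\textbf{Base case.} For $h=0$ we have $\epsilon = \barepsilon$. The claimed bound reads $\eta_1 \cdot 1^{\eta_2}/\barepsilon^2 = \nsamplev(\barepsilon,\algdelta)$, so it holds with equality.

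\textbf{The recursion.} For $\epsilon < \barepsilon$, a call to $\estV(s,\epsilon)$ consists of:
\begin{itemize}
\item the call itself;
\item one call to $\estQ(s,\sqrt{\barepsilon\epsilon})$, which makes $K N(\sqrt{\barepsilon\epsilon})$ calls to $\estV$ at accuracy $\sqrt{\barepsilon\epsilon/\gamma}$;
\item one call to $\estV$ at accuracy $\epsilon/\sqrt{\gamma}$.
\end{itemize}
Hence
\[
\nsamplev(\epsilon,\algdelta) = 1 + K N(\sqrt{\barepsilon\epsilon})\,\nsamplev\pa{\sqrt{\barepsilon\epsilon/\gamma},\algdelta} + \nsamplev\pa{\epsilon/\sqrt{\gamma},\algdelta}.
\]
Using $\ceil{x}\le 2x$ for $x\ge1$ and the definition of $\beta$, we get $K N(\sqrt{\barepsilon\epsilon}) \le \frac{2\beta(\algdelta)}{\epsilon}$. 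Indeed, $K^2 L/(1-\sqrt\gamma)^3 = K/(\barepsilon(1-\sqrt\gamma)^2)$, so that $\beta/\epsilon = \alpha/(\barepsilon\epsilon)$.

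The argument $\epsilon/\sqrt\gamma$ lies in the range covered by the induction hypothesis at level $h-1$. The argument $\sqrt{\barepsilon\epsilon/\gamma}$ is either $\ge\barepsilon$, in which case $\nsamplev \le \nsamplev(\barepsilon,\algdelta)$ by monotonicity (Proposition~\ref{prop:sample-complexity-large-epsilon}'s bound is decreasing in $\epsilon$), or it is $<\barepsilon$ but larger than $\epsilon/\sqrt\gamma$, so it is also covered by the induction hypothesis at a smaller level. To make this rigorous I would first prove monotonicity of $\nsamplev$ in $\epsilon$, or simply induct over all $\epsilon$ in $[\barepsilon\sqrt\gamma^{h},\barepsilon]$ with a strong induction.

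\textbf{Key computation.} Plug in $\epsilon' = \sqrt{\barepsilon\epsilon/\gamma}$. Then $\ell(\epsilon') = \tfrac12\ell(\epsilon) + \tfrac12 \cdot(\text{const})$; more precisely, $\log_{1/\gamma}(\barepsilon/(\gamma\epsilon')) = \tfrac12 \log_{1/\gamma}(\barepsilon/(\gamma\epsilon))$. So the middle term is bounded by
\[
\frac{2\beta}{\epsilon}\cdot \eta_1\, \pa{\tfrac{\ell(\epsilon)}{2}}^{\eta_2}\frac{\gamma}{\barepsilon\epsilon}
= \eta_1 \frac{\ell(\epsilon)^{\eta_2}}{\epsilon^2}\cdot \frac{2\beta\gamma}{\barepsilon}\,2^{-\eta_2}.
\]
The definition of $\eta_2$ makes $\frac{2\beta\gamma}{\barepsilon}2^{-\eta_2} = 1-\gamma$. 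This is exactly why $\eta_2$ is chosen as it is, and why we need $\eta_2\ge0$.

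The last term is bounded by $\eta_1 (\ell(\epsilon)-\tfrac12)^{\eta_2}\gamma/\epsilon^2 \le \gamma\,\eta_1\ell(\epsilon)^{\eta_2}/\epsilon^2$, using $\eta_2\ge0$ and $\ell(\epsilon/\sqrt\gamma)=\ell(\epsilon)-\tfrac12$. Adding the two, the coefficients give $(1-\gamma)+\gamma = 1$.

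\textbf{Main obstacle.} The "$+1$" for the current call is not yet absorbed. I would handle it by exploiting the slack: either bound the last term by $(\ell-\tfrac12)^{\eta_2}$ rather than $\ell^{\eta_2}$, or use $\ceil{x}\le x+1$ more carefully to leave room. Failing that, I would note that $\eta_1 \ge 1$ and $\nsamplev(\barepsilon)\ge 1+KN(\barepsilon)$, which gives a constant-factor slack in the base of the induction that can be propagated. The other delicate point is the case split on whether $\sqrt{\barepsilon\epsilon/\gamma}$ exceeds $\barepsilon$. There I would bound by $\nsamplev(\barepsilon,\algdelta)=\eta_1/\barepsilon^2 \le \eta_1/\epsilon'^2$, which matches the induction form since $\ell\ge$ something positive. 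The edge case where $\ell(\epsilon')<1$ needs care; I would extend the claim to $\epsilon\in[\barepsilon,\barepsilon/\gamma]$ with $\ell\in(0,1]$ via monotonicity, or adjust constants.
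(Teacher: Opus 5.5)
Your induction skeleton and key computation are the paper's: $\ell$ halves under $\epsilon\mapsto\sqrt{\barepsilon\epsilon/\gamma}$, $\eta_2$ is tuned so the middle coefficient becomes $1-\gamma$, the step $\epsilon\mapsto\epsilon/\sqrt\gamma$ costs at most a factor $\gamma$, and $\gamma+(1-\gamma)=1$. However, the two points you leave open are real gaps.

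\textbf{The $+1$.} You spent the factor $2$ on the ceiling, so nothing is left over. The paper does the reverse. It writes $KN(\sqrt{\barepsilon\epsilon})=\beta(\algdelta)/\epsilon$, silently dropping the ceiling. It then uses the $2$ to absorb the $+1$ via $1+\frac{\beta}{\epsilon}n'\le\frac{2\beta}{\epsilon}n'$, where $n'=\nsamplev(\sqrt{\barepsilon\epsilon/\gamma},\algdelta)\ge1$. A rigorous version handles both at once: $KN(\sqrt{\barepsilon\epsilon})\le\beta/\epsilon+K$, hence $1+KN(\sqrt{\barepsilon\epsilon})\,n'\le\frac{2\beta}{\epsilon}n'$ for all $\epsilon\le\barepsilon$ once $\beta(\algdelta)\ge(K+1)\barepsilon$. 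That is one more ``$\algdelta$ small enough'' condition. So your $\ceil{x}\le x+1$ suggestion is the right one. The $(\ell-\frac12)^{\eta_2}$ slack is not, since it vanishes at the permitted value $\eta_2=0$.

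\textbf{Out-of-range arguments.} This gap is more serious. The induction hypothesis only covers arguments in $[\barepsilon\sqrt{\gamma}^h,\barepsilon]$. But $\epsilon/\sqrt\gamma>\barepsilon$ whenever $\epsilon>\sqrt\gamma\,\barepsilon$, so your claim that this argument is always covered is false; likewise $\sqrt{\barepsilon\epsilon/\gamma}>\barepsilon$ whenever $\epsilon>\gamma\barepsilon$. Your patch for the latter reverses an inequality. For $\epsilon'>\barepsilon$ one has $\eta_1/\epsilon'^2<\eta_1/\barepsilon^2=\nsamplev(\barepsilon,\algdelta)$ and $\ell(\epsilon')<1$. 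So monotonicity yields $\nsamplev(\epsilon',\algdelta)\le\nsamplev(\barepsilon,\algdelta)$, but not the inductive form.

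The paper's proof has the same hole: it checks only the lower endpoint before invoking the hypothesis. In fact no argument can close it with the stated $\eta_1$, because the lemma is false just below $\barepsilon$. $N$ is nonincreasing, and $\nsamplev(\cdot,\algdelta)$ is nonincreasing on $[\barepsilon,\infty)$ (an easy induction on the recursion there). So for $\epsilon\in[\gamma\barepsilon,\barepsilon)$ the recursion gives $\nsamplev(\epsilon,\algdelta)\ge1+1+KN(\barepsilon)\,\nsamplev(\barepsilon/\sqrt\gamma,\algdelta)\ge\nsamplev(\barepsilon,\algdelta)+1$. Meanwhile the claimed bound is continuous in $\epsilon$ and tends to $\eta_1/\barepsilon^2=\nsamplev(\barepsilon,\algdelta)$ as $\epsilon\uparrow\barepsilon$.

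\textbf{A repair.} The bound does hold with $\eta_1$ enlarged to $\eta_1'=\frac{2\beta(\algdelta)\barepsilon}{1-\gamma}\nsamplev(\barepsilon,\algdelta)=\frac{2^{\eta_2}}{\gamma}\barepsilon^2\nsamplev(\barepsilon,\algdelta)$. Set $B'(\epsilon)=\eta_1'\ell(\epsilon)^{\eta_2}/\epsilon^2$. The base case holds since $2^{\eta_2}/\gamma\ge1$. Bound every out-of-range argument by $\nsamplev(\barepsilon,\algdelta)$, and use $\ell(\epsilon)\ge1$ and $\epsilon\le\barepsilon$ to get two inequalities: $\gamma B'(\epsilon)\ge2^{\eta_2}\nsamplev(\barepsilon,\algdelta)\ge\nsamplev(\barepsilon,\algdelta)$, and $(1-\gamma)B'(\epsilon)=\frac{2\beta}{\epsilon}\cdot\frac{\barepsilon}{\epsilon}\,\ell(\epsilon)^{\eta_2}\,\nsamplev(\barepsilon,\algdelta)\ge\frac{2\beta}{\epsilon}\nsamplev(\barepsilon,\algdelta)$. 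The in-range cases go through unchanged, since the two ratio computations do not depend on the multiplicative constant. This costs an extra factor of order $\log(2K/\algdelta)$ in Theorem~\ref{thm:sample-complexity} and leaves the $\cOtilde{1/\epsilon^4}$ conclusion intact.
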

\begin{proof}
   First, let us define some auxiliary quantities,
   \begin{align}
      & B_1(\epsilon) \triangleq \left[ \log_{\frac{1}{\gamma}}\pa{\frac{\barepsilon/\gamma}{\epsilon}} \right]^{ \eta_2(\algdelta) }, \\
      & B_2(\epsilon) \triangleq \frac{\eta_1}{\epsilon^2} \quad \mbox{ and } \\
      & B(\epsilon) \triangleq B_1(\epsilon) B_2(\epsilon)
   \end{align}
   
   We want to prove that $\nsamplev(\epsilon,\algdelta) \leq B(\epsilon)$ and we proceed by induction on $h$.
   
   \paragraph{Base case} 
   For $h = 0$, we have $\epsilon \geq \barepsilon$ and, by assumption, $\epsilon \leq \barepsilon$. Therefore, $\epsilon = \barepsilon$.
   It can be easily verified that $B(\barepsilon) = \nsamplev(\barepsilon,\algdelta)$, hence the lemma is true for $h=0$.
   
   \paragraph{Induction hypothesis} Assume that the lemma is true for $h$.
   
   \paragraph{Induction step} Let $\epsilon \geq \barepsilon \sqrt{\gamma}^{h+1}$. We have that
   \begin{align*}
      \nsamplev(\epsilon,\algdelta) & = \underbrace{1}_{\mbox{current call}} + \underbrace{\nsamplev\pa{\frac{\epsilon}{\sqrt{\gamma}}, \algdelta}}_{\mbox{call in line 11 of $\estV$}} + \underbrace{K N(\sqrt{\barepsilon\epsilon}) \nsamplev\pa{ \sqrt{\frac{\barepsilon\epsilon}{\gamma}} , \algdelta}}_{\mbox{calls in $\estQ$}} \\
      & = 1 + \nsamplev\pa{\frac{\epsilon}{\sqrt{\gamma}}, \algdelta} + \frac{\beta(\algdelta)}{\epsilon} \nsamplev\pa{ \sqrt{\frac{\barepsilon\epsilon}{\gamma}}, \algdelta } \\
      & \leq \nsamplev\pa{\frac{\epsilon}{\sqrt{\gamma}}, \algdelta} + \frac{2\beta(\algdelta)}{\epsilon} \nsamplev\pa{ \sqrt{\frac{\barepsilon\epsilon}{\gamma}}, \algdelta }
   \end{align*}
   Since $\epsilon \geq \barepsilon \sqrt{\gamma}^{h+1}$ and  $\epsilon \leq \barepsilon$, we have $ \sqrt{\frac{\barepsilon\epsilon}{\gamma}} \geq \frac{\epsilon}{\sqrt{\gamma}} \geq \barepsilon\sqrt{\gamma}^h$. This allows us to use our induction hypothesis to get
   \begin{align*}
      \nsamplev(\epsilon,\algdelta) \leq B\pa{\frac{\epsilon}{\sqrt{\gamma}}} + \frac{2\beta(\algdelta)}{\epsilon} B\pa{ \sqrt{\frac{\barepsilon\epsilon}{\gamma}} }\cdot
   \end{align*}
   We will need the equation bellow, which is easily verified as 
   \begin{align}
      & \log\pa{ \frac{\barepsilon}{\sqrt{\frac{\barepsilon\epsilon}{\gamma}} \gamma}  } = \frac{1}{2} \log\pa{\frac{\barepsilon/\gamma}{\epsilon}}
   \end{align}
   We have that
   \begin{align*}
      \frac{B\pa{\frac{\epsilon}{\sqrt{\gamma}}}}{B(\epsilon)} & = \frac{B_1\pa{\frac{\epsilon}{\sqrt{\gamma}}}}{B_1(\epsilon)}\frac{B_2\pa{\frac{\epsilon}{\sqrt{\gamma}}}}{B_2(\epsilon)} \\
      & = \gamma {\underbrace{\left[ \frac{ \log\pa{\frac{\barepsilon/\gamma}{\epsilon}} - \frac{1}{2}\log\frac{1}{\gamma} } {  \log\pa{\frac{\barepsilon/\gamma}{\epsilon}}   } \right]}_{< 1}}^{ \eta_2(\algdelta) } \\
      & \leq \gamma, 
   \end{align*}
   where we used the assumption that $\eta_2(\algdelta) \geq 0$.
   
   Also we get that
   \begin{align*}
      \frac{B\pa{ \sqrt{\frac{\barepsilon\epsilon}{\gamma}} }}{B(\epsilon)} & = \frac{\epsilon\gamma}{\barepsilon} \frac{B_1\pa{ \sqrt{\frac{\barepsilon\epsilon}{\gamma}} }}{B_1(\epsilon)} \\
      & = \frac{\epsilon\gamma}{\barepsilon} \left[ \frac{ \log_{\frac{1}{\gamma}}\pa{ \frac{\barepsilon}{\sqrt{\frac{\barepsilon\epsilon}{\gamma}} \gamma}} } {  \log_{\frac{1}{\gamma}}\pa{\frac{\barepsilon/\gamma}{\epsilon}}   } \right]^{\eta_2(\algdelta)} \\ 
       & = \frac{\epsilon\gamma}{\barepsilon} \left[ \frac{ \frac{1}{2}\log_{\frac{1}{\gamma}}\pa{ \frac{\barepsilon/\gamma}{\epsilon}} } {  \log_{\frac{1}{\gamma}}\pa{\frac{\barepsilon/\gamma}{\epsilon}}   } \right]^{\eta_2(\algdelta)} \\ 
       & = \frac{\epsilon\gamma}{\barepsilon}  \pa{\frac{1}{2}}^{\eta_2(\algdelta)}
       = \frac{\epsilon\gamma}{\barepsilon} \frac{(1-\gamma)}{\gamma}  \frac{\barepsilon}{2\beta(\algdelta)} = \frac{(1-\gamma)\epsilon}{2\beta(\algdelta)}
   \end{align*}
   Finally, we obtain
   \begin{align*}
      \nsamplev(\epsilon,\algdelta) & \leq B\pa{\frac{\epsilon}{\sqrt{\gamma}}} + \frac{2\beta(\algdelta)}{\epsilon} B\pa{ \sqrt{\frac{\barepsilon\epsilon}{\gamma}} } \\
      & \leq \gamma B(\epsilon) + \frac{2\beta(\algdelta)}{\epsilon}\frac{(1-\gamma)\epsilon}{2\beta(\algdelta)}B(\epsilon) \\
      & = B(\epsilon),
   \end{align*}
   which proves the lemma. 
\end{proof}
Now we can prove Theorem~\ref{thm:sample-complexity}, which is restated below. 
\begin{theorem*}
   Let $\samcomplex{\epsilon}{\algdelta}$ be the number of calls to the generative model (oracle) before the algorithm terminates. For any state $\svar\in\states$ and $\epsilon, \algdelta > 0$, 
   
   \begin{align*}
   \samcomplex{\epsilon}{\algdelta} & \leq \frac{c_1}{\epsilon^4}\log\pa{\frac{c_2}{\algdelta}} \left[ c_3\log\pa{\frac{c_4}{\epsilon}} \right]^{ \log_2\pa{c_5 \pa{ \log\pa{\frac{c_2}{\algdelta}}}}} 
   = \cOtilde{ \frac{1}{\epsilon^{4}}}
   \end{align*}
   where $c_1, c_2, c_3, c_4$ and $c_5$ are constants that depend only on $K$, $L$ and $\gamma$. 
\end{theorem*}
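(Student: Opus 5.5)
The plan is to express the oracle count through the number of recursive calls to $\estV$, then feed in Proposition~\ref{prop:sample-complexity-large-epsilon} and Lemma~\ref{lemma:sample-complexity}. Apart from the top-level call $\estQ(s,\epsilon)$ made by \ouralgo, every oracle call in the algorithm is paired with exactly one call to $\estV$: one in each inner iteration of $\estQ$, and one in line 10 of $\estV$ followed by line 11. The top-level call performs $K N(\epsilon)$ oracle calls, and each of them spawns $\estV(\cdot,\epsilon/\sqrt{\gamma})$. This gives
\[
\samcomplex{\epsilon}{\algdelta} \le K N(\epsilon)\bigl(1+\nsamplev(\epsilon/\sqrt{\gamma},\algdelta)\bigr) \le 2K N(\epsilon)\,\nsamplev(\epsilon/\sqrt{\gamma},\algdelta),
\]
and $K N(\epsilon)\le 2\alpha(\algdelta)/\epsilon^2$ whenever $N(\epsilon)\ge 1$. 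The remaining task is to bound $\nsamplev(\epsilon',\algdelta)$ with $\epsilon'=\epsilon/\sqrt{\gamma}$ by something of order $1/\epsilon'^2$ up to polylog factors.

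I would split into cases on $\epsilon'$. If $\epsilon'\ge(1+\supFsZero)/(1-\gamma)$, then $\nsamplev=1$. If $\barepsilon\le\epsilon'$ lies below that threshold, Proposition~\ref{prop:sample-complexity-large-epsilon} bounds $\nsamplev(\epsilon',\algdelta)$ by $\nsamplev$ evaluated at the $\barepsilon$-level, i.e. by a quantity independent of $\epsilon$ that is polynomial in $\log(2K/\algdelta)$ of degree $H(\barepsilon)$. That degree depends only on $K,L,\gamma$, so this quantity can be absorbed into the constants. If $\epsilon'<\barepsilon$, Lemma~\ref{lemma:sample-complexity} applies with $h$ chosen so that $\epsilon'\ge\barepsilon\sqrt{\gamma}^h$, which is always possible. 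It gives
\[
\nsamplev(\epsilon',\algdelta)\le \eta_1\Bigl[\log_{1/\gamma}\bigl(\tfrac{\barepsilon/\gamma}{\epsilon'}\bigr)\Bigr]^{\eta_2(\algdelta)}\epsilon'^{-2},
\]
where $\eta_1=\barepsilon^2\nsamplev(\barepsilon,\algdelta)$ is again controlled by Proposition~\ref{prop:sample-complexity-large-epsilon}. The lemma also requires $\eta_2(\algdelta)\ge 0$. I would guarantee this either by restricting to small enough $\algdelta$ or by replacing $\algdelta$ with $\min(\algdelta,\delta_0)$ inside the constants.

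Multiplying the two pieces gives $\epsilon^{-4}$ times $\log(2K/\algdelta)$ times a factor $[c_3\log(c_4/\epsilon)]^{\eta_2(\algdelta)}$. Here $\eta_2(\algdelta)=\log_2\bigl(c_5\log(c_2/\algdelta)\bigr)$ with $c_5=\frac{2\gamma\beta(\algdelta)}{(1-\gamma)\barepsilon\log(2K/\algdelta)}$, which depends only on $K,L,\gamma$, and $c_2=2K$. The logarithm inside the brackets uses base $1/\gamma$, so $c_3=1/\log(1/\gamma)$ and $c_4=\barepsilon/\gamma^{1/2}$ after substituting $\epsilon'$. I would take $c_4$ slightly larger if needed so that the bracket is at least $1$ in all cases, which lets the large-$\epsilon$ cases be dominated by the same expression.

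The main obstacle is bookkeeping rather than any new idea. The factor $\eta_1$ contains $\nsamplev(\barepsilon,\algdelta)$, which from Proposition~\ref{prop:sample-complexity-large-epsilon} scales like $(\log(2K/\algdelta))^{H(\barepsilon)}$. This is not literally of the form $c_1\log(c_2/\algdelta)$. I would absorb it by noting that $H(\barepsilon)$ depends only on $K,L,\gamma$. Either I state $c_1$ as allowing a fixed power of $\log(c_2/\algdelta)$, or I bound $\log(2K/\algdelta)^{H(\barepsilon)}$ by increasing the exponent $\log_2(c_5\log(c_2/\algdelta))$ through an enlarged $c_5$, using $[c_3\log(c_4/\epsilon)]\ge 2$. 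The final statement is $\cOtilde{1/\epsilon^4}$ in $\epsilon$ for fixed $\algdelta$, and this is unaffected by how that factor is handled.
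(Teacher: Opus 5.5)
Your argument is the paper's proof: count oracle calls as $\estV$ calls, use Proposition~\ref{prop:sample-complexity-large-epsilon} for $\epsilon\ge\kappa$ and Lemma~\ref{lemma:sample-complexity} below $\kappa$, then multiply by $N(\epsilon)$. Your accounting is more careful than the paper's in three places.
\begin{itemize}
\item The paper writes $n(\epsilon,\delta')=N(\epsilon)\nsamplev(\epsilon,\delta')$, whereas the exact count is $KN(\epsilon)\nsamplev(\epsilon/\sqrt{\gamma},\delta')$.
\item The paper applies the lemma without revisiting its condition $\eta_2(\delta')\ge 0$.
\item You correctly note that $\eta_1=\kappa^2\nsamplev(\kappa,\delta')$ grows like $(\log(2K/\delta'))^{H(\kappa)}$. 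The paper leaves $\nsamplev(\kappa,\delta')$ inside its $c_1$, so its constants are not in fact $\delta'$-free.
\end{itemize}

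The one step that fails is your second remedy for that factor. Replacing $c_5$ by $mc_5$ only adds $\log_2 m$ to the exponent. That multiplies the bound by $[c_3\log(c_4/\epsilon)]^{\log_2 m}$, a factor independent of $\delta'$. It therefore cannot dominate $(\log(2K/\delta'))^{H(\kappa)}$, which is unbounded as $\delta'\to 0$ at fixed $\epsilon$.

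The constant to enlarge is $c_3$. Set $b=c_3\log(c_4/\epsilon)$, $x=c_5\log(c_2/\delta')$ and $H=H(\kappa)$, and use $b^{\log_2 x}=x^{\log_2 b}$:
\[
\Bigl(\log\frac{c_2}{\delta'}\Bigr)^{H} b^{\log_2 x} = c_5^{-H}\,x^{H}\,x^{\log_2 b} = c_5^{-H}\,x^{\log_2(2^{H}b)} = c_5^{-H}\,\bigl(2^{H}b\bigr)^{\log_2 x}.
\]
Now take $2^{H(\kappa)}c_3$ in place of $c_3$, and move $c_5^{-H(\kappa)}$ together with the $\delta'$-free part of $\eta_1$ into $c_1$. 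This gives exactly the stated form, with constants depending only on $K,L,\gamma$. The large-$\epsilon$ cases are absorbed as well: when $b\ge 1$ and $x\ge 1$, one has $(2^Hb)^{\log_2 x}\ge c_5^{H}(\log(c_2/\delta'))^{H}$.

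Your first remedy, a fixed power of $\log(c_2/\delta')$ in front, is valid. However, it proves only a weaker form, which is what the paper's proof actually delivers.

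A minor point: $N(\epsilon)\ge 1$ always holds, so the condition you attach to $KN(\epsilon)\le 2\alpha(\delta')/\epsilon^2$ is vacuous. What is needed is that the argument of the ceiling be at least $1/2$. This holds, for example, when $\epsilon\le(1+M_\lambda)/(1-\gamma)$ and $\delta'\le 1$.
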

\begin{proof}
   First, notice that the number of calls to the generative model is smaller than the total number of calls to $\estV$. $\ouralgo$ makes one call to $\estQ$, which makes $N(\epsilon)$ calls to $\estV$. If $\epsilon\geq\barepsilon$, Proposition \ref{prop:sample-complexity-large-epsilon} shows that the sample complexity is bounded by a constant. Lemma \ref{lemma:sample-complexity} bounds the sample complexity for $\epsilon \leq \barepsilon$, and we use it to bound $\samcomplex{\epsilon}{\algdelta}$:
   \begin{align*}
      \samcomplex{\epsilon}{\algdelta} &= N(\epsilon) \nsamplev(\epsilon, \algdelta) \\
      & \leq  N(\epsilon) \eta_1 \left[ \log_{\frac{1}{\gamma}}\pa{\frac{\barepsilon/\gamma}{\epsilon}} \right]^{\eta_2(\algdelta)} \frac{1}{\epsilon^2}\\
      & \leq \frac{c_1}{\epsilon^4}\log\pa{\frac{c_2}{\algdelta}} \left[ c_3\log\pa{\frac{c_4}{\epsilon}} \right]^{ \log_2\pa{c_5 \pa{ \log\pa{\frac{c_2}{\algdelta}}}}} 
      = \cOtilde{ \frac{1}{\epsilon^{4}}} 
   \end{align*}
   by using the definition of $N(\epsilon)$ for $\epsilon \leq \barepsilon$ and the definition of $\eta_2(\algdelta)$ in Lemma \ref{lemma:sample-complexity}. 

    The constants are given by:
    
    \begin{itemize}
        \item $c_1 = \frac{18 (1+\supFsZero)^2 \nsamplev(\barepsilon, \algdelta) }{K^2L^2(1-\gamma)^4}$; 
        \item $c_2 = 2K$;
        \item $c_3 = [\log\pa{1/\gamma}]^{-1}$;
        \item $c_4 = (1-\sqrt{\gamma})/(\gamma KL)$;
        \item $c_5 = \frac{36(1+\supFsZero)^2 \gamma K^3 L^2}{(1-\gamma)^5 (1-\sqrt{\gamma})^4}$.
    \end{itemize}
\end{proof}
\section{Consistency}
\thmpac*
To prove that our algorithm outputs a good estimate of the value function with high probability, we proceed as follows:

\begin{itemize}
   \item In Lemma \ref{lemma:consistency}, we prove that the output of $\estV$, conditioned on an event $\event$, is a low-bias estimate of the true value function, and that $\event$ happens with high probability;
   \item Given  Lemma \ref{lemma:consistency}, the proof of Theorem \ref{thm:consistency} is straightforward.
\end{itemize}

Throughout the proof, we will make distinctions between two cases:

\begin{itemize}
   \item {\bf Case 1:} $\barepsilon \leq \epsilon < \frac{1+M_\lambda}{1-\gamma} $
   \item {\bf Case 2:} $\epsilon < \barepsilon$
\end{itemize}

\subsection{Definitions}

We define the function $\zeta(\epsilon)$ as 

\begin{equation}
   \zeta(\epsilon) = %
   \begin{cases}
      \epsilon, \quad & \mbox{ if } \barepsilon \leq \epsilon < \frac{1+M_\lambda}{1-\gamma}, \\
      \sqrt{\barepsilon\epsilon}, \quad & \mbox{ if } \epsilon < \barepsilon, \\
      \infty, \quad & \mbox{ otherwise.}
   \end{cases}
\end{equation}

Define $\params(s, \epsilon)$ as the (random) set of parameters used to call $\estV$ after a call to $\estV(s,\epsilon)$, that is

\begin{align}
   \params(s, \epsilon) = \left\lbrace \left(Z_{s,a}^{(k)}, \frac{\zeta(\epsilon)}{\sqrt{\gamma}} \right) \mbox{  for  } k=1,\ldots,N(\epsilon) ; a \in \actions \right\rbrace
\end{align}
in case 1 and 

\begin{align}
\params(s, \epsilon) = \left\lbrace \left(Z_{s,a}^{(k)}, \frac{\zeta(\epsilon)}{\sqrt{\gamma}} \right) \mbox{  for  } k=1,\ldots,N(\epsilon) ; a \in \actions \right\rbrace \bigcup \left\lbrace  \left(Z_{s,A}, \frac{\epsilon}{\sqrt{\gamma}} \right) \right\rbrace 
\end{align}
in  case 2, where $Z_{s,a}^{(k)}$ are the next states sampled in $\estQ$ and $Z_{s,A}$ is the next state sampled $\estV(s,\epsilon)$. 

A call to $\estV(s, \epsilon)$ makes one call to $\estQ$. Denote the output of this call to $\estQ$ by $\QFuncHat_s^\epsilon$. We define the event $\event(s, \epsilon)$ as follows:

\begin{align}
    \event(s, \epsilon) =  %
    \begin{cases}
      & \left\lbrace \norm{\QFuncHat_s^\epsilon - \QFunc_s }_\infty \leq \zeta(\epsilon) \right\rbrace \bigcap \subevent(s, \epsilon),\quad \mbox{ if } 0 < \epsilon < \frac{1+M_\lambda}{1-\gamma}, \\
      & \Omega, \quad \mbox{ if } \epsilon \geq  \frac{1+M_\lambda}{1-\gamma}.   
   \end{cases}
\end{align}  
where $\Omega$ is the whole sample space and 

\begin{align}
 \subevent(s, \epsilon) = \bigcap_{ (z,e) \in \params(s,\epsilon)} \event(z, e) 
\end{align}

Define $C_\gamma$ as:

\begin{align}
    C_\gamma = \frac{3(1 + 
    \supFsZero)}{(1-\gamma)^2}
\end{align}


\subsection{Proofs}

\begin{lemma}
   \label{lemma:consistency}
   Let $\ValueFuncH_\epsilon(s) = \estV(\svar,\epsilon)$. For all $h\in\N, \svar\in\states, \epsilon \ge \frac{(1+M_\lambda)\sqrt{\gamma}^h}{1-\gamma} \CommaBin$, we have: 
   \begin{align*}
   \mathbf{(i)} \quad &\abs{\EE{\ValueFuncH_\epsilon(s) \Big| \event(s,\epsilon)} - \Value{s}} \le \epsilon, \text{ and}\\
   \mathbf{(ii)}  \quad & \PP{ \abs{\ValueFuncH_\epsilon(s)} \leq  C_\gamma  \Big| \event(s,\epsilon) }  = 1 \\
   \mathbf{(iii)} \quad & \PP{\event(s,\epsilon) } \geq 1 - \algdelta \nsamplev(\epsilon, \algdelta)
   \end{align*}
   where
    
    \begin{align}
        \nsamplev(\epsilon, \algdelta) = 1 + \sum_{(z,e) \in \params(s,\epsilon)} \nsamplev(e, \algdelta)
    \end{align}

     is the total number of recursive calls to $\estV$  after an initial call with parameters $(s, \epsilon)$.
\end{lemma}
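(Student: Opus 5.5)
We argue by induction on $h$, proving (i), (ii) and (iii) together; the recursion always calls $\estV$ at strictly larger precision $\epsilon/\sqrt{\gamma}$ or $\sqrt{\barepsilon\epsilon/\gamma}\ge\epsilon/\sqrt\gamma$.

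\textbf{Base case.} For $h=0$ we have $\epsilon\ge(1+M_\lambda)/(1-\gamma)$. Then $\event=\Omega$ and the output is $0$, so (ii) and (iii) are trivial. For (i) we use the a priori bound $|\ValueFunc(s)|\le(1+M_\lambda)/(1-\gamma)$, which follows from $|\stateFunc{s}(Q)|\le M_\lambda+\norm{Q}_\infty$ (1-Lipschitz) and rewards in $[0,1]$.

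\textbf{Induction step.} Assume the claims for all $(z,e)$ with $e\ge(1+M_\lambda)\sqrt\gamma^{h}/(1-\gamma)$; every element of $\params(s,\epsilon)$ satisfies this. We first establish the concentration claim (iii).

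\emph{Bias of a single sample.} Fix $a$. The samples $q_k=R^{(k)}+\gamma\ValueFuncH(Z^{(k)}_{s,a})$ are independent. Conditioned on $\event(Z^{(k)},e)$, the induction hypothesis (i) gives $|\mathbb{E}[q_k]-\QFunc_s(a)|\le\gamma e=\sqrt\gamma\,\zeta(\epsilon)$, using $\QFunc_s(a)=\mathbb{E}[R+\gamma\ValueFunc(Z)]$. By (ii), $q_k$ lies in an interval of length $\le 1+2\gamma C_\gamma$.

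\emph{Hoeffding.} Apply Hoeffding to the $N(\epsilon)$ conditionally independent variables, conditioning on the subevent $\subevent$ (a product over the independent subtrees, so independence is preserved). With the chosen $N$, the deviation is at most $(1-\sqrt\gamma)\zeta(\epsilon)$ with probability $\ge 1-\algdelta/K$. Hence $|\hat Q_s(a)-\QFunc_s(a)|\le\zeta(\epsilon)$, and clipping only helps since $\QFunc_s\in[0,(1+M_\lambda)/(1-\gamma)]$. Checking that $18(1+M_\lambda)^2/((1-\gamma)^4(1-\sqrt\gamma)^2)$ dominates $(1+2\gamma C_\gamma)^2/(2(1-\sqrt\gamma)^2)$ is routine.

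\emph{Union bound.} A union bound over actions gives failure probability $\le\algdelta$ for the $Q$-estimate. Adding $\PP{\subevent^c}\le\sum_{(z,e)}\algdelta\,\nsamplev(e,\algdelta)$ from (iii) for the children yields $\PP{\event^c}\le\algdelta\,\nsamplev(\epsilon,\algdelta)$.

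\textbf{Case 1} ($\barepsilon\le\epsilon$). On $\event$ we have $\norm{\hat Q_s-\QFunc_s}_\infty\le\epsilon$, so the output is within $\epsilon$ of $\ValueFunc(s)=\stateFunc{s}(\QFunc_s)$ by the 1-Lipschitz property, giving (i). Moreover $|\stateFunc{s}(\hat Q_s)|\le M_\lambda+(1+M_\lambda)/(1-\gamma)\le C_\gamma$, giving (ii).

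\textbf{Case 2} ($\epsilon<\barepsilon$). This is the main obstacle. We expand the output as follows:
\[
\stateFunc{s}(\hat Q_s)-\hat Q_s^\top g+\mathbb{E}[R+\gamma\ValueFuncH\mid \hat Q_s]\,\norm{g}_1,\qquad g=\nabla\stateFunc{s}(\hat Q_s).
\]
Conditioning on $\hat Q_s$ and the child event, the last term equals $\QFunc_s^\top g$ up to an error $\gamma\cdot\epsilon/\sqrt\gamma\cdot\norm{g}_1\le\sqrt\gamma\,\epsilon$, because $A$ is drawn from $g/\norm{g}_1$. Smoothness then gives
\[
|\stateFunc{s}(\QFunc_s)-\stateFunc{s}(\hat Q_s)-(\QFunc_s-\hat Q_s)^\top g|\le L\norm{\QFunc_s-\hat Q_s}_2^2\le LK\barepsilon\epsilon=(1-\sqrt\gamma)\epsilon.
\]
The total bias is therefore at most $\epsilon$, which is (i).

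The delicate point is that conditioning on $\event$ correlates $A$ and the child subtree. We would handle this with a tower argument: $A$ depends only on $\hat Q_s$, and the $(Z_{s,A},\epsilon/\sqrt\gamma)$ subtree is independent given $A$. For (ii), the deviation term $|\stateFunc{s}(\hat Q_s)-\hat Q_s^\top g|$ is bounded via $|\stateFunc{s}(\hat Q_s)-\stateFunc{s}(0)-\hat Q_s^\top g|\le L\norm{\hat Q_s}^2$, but this is poor. Instead we compare it to $\stateFunc{s}(\QFunc_s)$ plus $O(\epsilon)$ errors, and use $|R+\gamma\ValueFuncH|\le 1+\gamma C_\gamma$. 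A careful tally should fit within $C_\gamma=3(1+M_\lambda)/(1-\gamma)^2$, with slack coming from $(1+M_\lambda)/(1-\gamma)+\barepsilon+1+\gamma C_\gamma\le C_\gamma$.
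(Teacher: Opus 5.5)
Your induction, the conditional Hoeffding step with bias $\sqrt{\gamma}\,\zeta(\epsilon)$ from hypothesis (i), the two union bounds for (iii), the Lipschitz argument in Case 1, and the Case 2 split of the bias into a $\sqrt{\gamma}\epsilon$ sampling error plus a $(1-\sqrt{\gamma})\epsilon$ linearization error all match the paper's proof. Your Hoeffding check with range $1+2\gamma C_\gamma$ is also fine. The step that is not established is (ii) in Case 2. It cannot be left as ``should fit'': it is what makes the $q_k$ bounded, and hence Hoeffding applicable, one level up in the induction.

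The route you sketch does not give your tally. Smoothness makes $F_s(\hat Q_s)-\hat Q_s^\top g$ close to $F_s(Q_s)-Q_s^\top g$, not to $F_s(Q_s)$. For instance, for $\mathrm{LogSumExp}_\lambda$ one has $F_s(\hat Q_s)-\hat Q_s^\top g=\lambda\mathcal{H}(g)\in[0,M_\lambda]$, while $F_s(Q_s)$ can be of order $1/(1-\gamma)$. So your bound drops a term $|Q_s^\top g|$ of size up to $(1+M_\lambda)/(1-\gamma)$. In addition, bounding the linearization error by $\kappa=(1-\sqrt\gamma)/(KL)$ breaks your slack inequality for the general operators of the appendix when $L$ is small.

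The detour through $Q_s$ and the event is unnecessary; the crude bound you skipped is the right one:
\begin{itemize}
\item By 1-Lipschitzness and the definition of $M_\lambda$, $|F_s(\hat Q_s)|\le\|\hat Q_s\|_\infty+M_\lambda$.
\item By H\"older, $|\hat Q_s^\top g|\le\|\hat Q_s\|_\infty\|g\|_1\le\|\hat Q_s\|_\infty$.
\item Clipping gives $\|\hat Q_s\|_\infty\le(1+M_\lambda)/(1-\gamma)$.
\item Hypothesis (ii) gives $|R+\gamma\hat v|\,\|g\|_1\le 1+\gamma C_\gamma$.
\end{itemize}
Hence the output is bounded by $\frac{2(1+M_\lambda)}{1-\gamma}+(1+M_\lambda)+\gamma C_\gamma$. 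This is at most $C_\gamma$ because $\frac{2}{1-\gamma}+1\le\frac{3}{1-\gamma}$. That is the paper's argument, and it explains the form $C_\gamma=3(1+M_\lambda)/(1-\gamma)^2$.
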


\begin{proof}
   We proceed by induction over $h$. 
   
   \paragraph{(1) Base case.} If $h = 0$, $\epsilon \geq \frac{1+M_\lambda}{1-\gamma}$ and $\event(s,\epsilon) = \Omega$. The output is then $\ValueFuncH_\epsilon(s) = 0$. Point {\bf (i)} is verified by using the fact that $\abs{\Value{s}} \leq \frac{1+M_\lambda}{1-\gamma} \leq \epsilon$; points {\bf (ii)} and {\bf (iii)} are trivially verified. 
   
   \paragraph{(2) Induction hypothesis.} Assume that {\bf (i)}, {\bf (ii)} and {\bf (iii)} are true for $h$.
   
   \paragraph{(3) Induction step.} Let $\epsilon  \geq  \frac{(1+M_\lambda)\sqrt{\gamma}^{h+1}}{1-\gamma}$. This implies that $\epsilon/\sqrt{\gamma}$ and $\zeta(\epsilon)/\sqrt{\gamma}$ are both greater than $ \frac{(1+M_\lambda)\sqrt{\gamma}^h}{1-\gamma}$, which will allow us to use our induction hypothesis. 
   
   We start by proving {\bf (iii)}.
   
   Let $\QFuncHat_s^\epsilon = \estQ(s,\zeta(\epsilon))$. Let the reward $R_{s,a}^{(k)}$ and state $Z_{s,a}^{(k)}$ be the random variables associated to the $k$-th call to the generative model used to compute $\QFuncHat_s$ in $\estQ$, for $k \in \{1, \cdots, N(\epsilon)\}$. Let 
   
   \begin{align}
   q_s^k(a) := R_{s,a}^{(k)} + \gamma  \estV\pa{Z_{s,a}^{(k)}, \zeta(\epsilon)/\sqrt{\gamma}}
   \end{align}
   and let
   
   \begin{align}
   \label{eq:estQ_output}
   \QFuncBar_s^\epsilon(a) = \frac{1}{N(\epsilon)}\sum_{k=1}^{N(\epsilon)}q_s^k(a)
   \end{align}

    so that:
    
    \begin{align}
        \QFuncHat_s^\epsilon = \clip{(1+M_{\lambda})(1-\gamma)^{-1}}{\QFuncBar_s^\epsilon(a)}
    \end{align}

    Using Fact \ref{fact:clipping}, we have:
   \begin{align}
   \abs{ \QFuncHat_s^\epsilon(a) - \QFunc_s(a)  } & \leq \abs{ \QFuncBar_s^\epsilon(a) - \QFunc_s(a)  } \\
   & \leq \underbrace{ \abs{\QFuncBar_s^\epsilon(a) - \EE{\QFuncBar_s^\epsilon(a) | \subevent(s, \epsilon) }}}_{ {\bf (I)} } + \underbrace{ \abs{\EE{\QFuncBar_s^\epsilon(a) | \subevent(s, \epsilon) }-\QFunc_s(a)}}_{ {\bf (II)} }
   \end{align}

We'd like to use Hoeffding's inequality to bound {\bf (I)} in probability. For that, we need to verify that the random variables $\{q_s^k(a)\}_{k=1}^{N(\epsilon)}$ are bounded and independent conditionally on $\subevent(s, \epsilon)$. 
   
\textbf{Boundedness.}  By induction hypothesis {\bf (ii)} In the event $\subevent(s,\epsilon)$, the random variables $\estV\pa{Z_{s,a}^{(k)}, \zeta(\epsilon)/\sqrt{\gamma}}$, for all $k$, are bounded by $C_\gamma$. Using the fact that the rewards are in $[0, 1]$ and that $C_\gamma \geq 1/(1-\gamma)$, we obtain $q_s^k(a)$ is also bounded by $C_\gamma$.

\textbf{Independence.} Let $E_k = \event\pa{Z_{s,a}^{k}, \zeta(\epsilon)/\sqrt{\gamma}}$. For any $t \in \R^{N(\epsilon)}$, the characteristic function of $\{q_s^k(a)\}_{k=1}^{N(\epsilon)}$ conditionally on $\subevent(s, \epsilon)$ is given by

\begin{align*}
    \EE{ \exp\pa{\imag \sum_k t_k q_s^k(a)} \given \subevent(s, \epsilon) } 
    & \overeq{(a)} \EE{ \exp\pa{\imag \sum_k t_k q_s^k(a)} \given \bigcap_k E_k } \\
    & = \frac{ \EE{ \exp\pa{\imag \sum_k t_k q_s^k(a)} \prod_k \indic{E_k}  } }{ \EE{ \prod_k \indic{E_k}  } } \\
    & = \frac{ \EE{  \prod_k \exp\pa{\imag t_k q_s^k(a)} \indic{E_k}  } }{ \EE{ \prod_k \indic{E_k}  } } \\
    & \overeq{(b)} \frac{ \prod_k  \EE{  \exp\pa{\imag t_k q_s^k(a)} \indic{E_k}  } }{  \prod_k \EE{ \indic{E_k}  } } \\
    & = \prod_k  \EE{  \exp\pa{\imag t_k q_s^k(a)} \given E_k } \\
    & \overeq{(c)} \prod_k  \EE{  \exp\pa{\imag t_k q_s^k(a)} \given \subevent(s, \epsilon) }
\end{align*}
which is justified by

\begin{enumerate}[(a)]
    \item Definition of $\subevent(s, \epsilon)$ and the fact that $\{q_s^k(a)\}_{k=1}^{N(\epsilon)}$ are independent of $\event\pa{Z_{s,A}, \frac{\epsilon}{\sqrt{\gamma}}}$;
    \item The random variables $\{q_s^k(a)\}_{k=1}^{N(\epsilon)}$ are independent and the events $\{E_k\}_{i=1}^{N(\epsilon)}$ are also independent; 
    \item The random variable $q_s^k(a)$ is independent of every $E_j$ for $j \neq k$.  
\end{enumerate}

Since the characteristic function of $\{q_s^k(a)\}_{k=1}^{N(\epsilon)}$ is the product of their characteristic functions, these random variables are independent given $\subevent(s, \epsilon)$. 


Now we can use Hoeffding's inequality:

   \begin{align*}
   & \PP{\abs{\QFuncBar_s^\epsilon(a) - \EE{\QFuncBar_s^\epsilon(a) \Big| \subevent(s, \epsilon) }} \geq (1-\sqrt{\gamma})\zeta(\epsilon) \Big| \subevent(s, \epsilon) } \\ 
   & = \PP{\abs{\frac{1}{N(\epsilon)}\sum_{k=1}^{N(\epsilon)}q_s^k(a) - \EE{q_s^k(a)\Big| \subevent(s, \epsilon)} } \geq (1-\sqrt{\gamma})\zeta(\epsilon) \Big| \subevent(s, \epsilon) }  \\
   & \leq 2\exp\pa{-  \frac{ N(\epsilon) (1-\sqrt{\gamma})^2 \zeta(\epsilon)^2}{ 2 C_\gamma^2  }  } \\
   & \leq \frac{\algdelta}{K}
   \end{align*}

   And {\bf (II)} is bounded by using the induction hypothesis {\bf (i)}:

   \begin{align*}
      & \abs{\EE{q_s^k(a)\Big| \subevent(s, \epsilon)}-\QFunc_s(a)} \\
      & \overeq{(a)} \gamma \abs{ \EE{\estV\pa{Z_{s,a}^{(k)}, \frac{\zeta(\epsilon)}{\sqrt{\gamma}}}\Big| \subevent(s, \epsilon)} - \EE{\ValueFunc(Z_{s,a}^{(k)})\Big| \subevent(s, \epsilon)} } \\
      & \overeq{(b)} \gamma \abs{ \EE{\estV\pa{Z_{s,a}^{(k)}, \frac{\zeta(\epsilon)}{\sqrt{\gamma}}}\Big| \event\pa{Z_{s,a}^{(k)}, \frac{\zeta(\epsilon)}{\sqrt{\gamma}}}} - \EE{\ValueFunc(Z_{s,a}^{(k)})\Big| \event\pa{Z_{s,a}^{(k)}, \frac{\zeta(\epsilon)}{\sqrt{\gamma}}}} } \\
      & \overeq{(c)} \gamma \abs{  \EE{ \EE{ \estV\pa{Z_{s,a}^{(k)}, \frac{\zeta(\epsilon)}{\sqrt{\gamma}}}  \Big| Z_{s,a}^{(k)}, \event\pa{Z_{s,a}^{(k)}, \frac{\zeta(\epsilon)}{\sqrt{\gamma}}}}  - \ValueFunc(Z_{s,a}^{(k)}) \Big| \event\pa{Z_{s,a}^{(k)}, \frac{\zeta(\epsilon)}{\sqrt{\gamma}}}    }  } \\
      & \overleq{(d)} \gamma \frac{\zeta(\epsilon)}{\sqrt{\gamma}} \\
      & = \sqrt{\gamma}\zeta(\epsilon)
   \end{align*}
   which is justified by the following: 
   \begin{enumerate}[(a)]
      \item $\EE{R_{s,a}^{(k)} \Big| \subevent(s, \epsilon) } = \EE{R_{s,a}^{(k)}}$, since the reward depends only on $s, a$;
      \item The term  $\pa{Z_{s,a}^{(k)}, \frac{\zeta(\epsilon)}{\sqrt{\gamma}}}$ depends on $\subevent(s, \epsilon)$ only through $\event\pa{Z_{s,a}^{(k)}, \frac{\zeta(\epsilon)}{\sqrt{\gamma}}}$;
      \item Law of total expectation;
      \item Consequence of induction hypothesis {\bf (i)}.
   \end{enumerate}
   
   Putting together the bounds for {\bf (I)} and {\bf (II)} and doing an union bound over all actions, we obtain: 
   
   \begin{align*}
      & \PP{ \norm{ \QFuncHat_s^\epsilon - \QFunc_s }_\infty \geq \zeta(\epsilon)  \Big| \subevent(s, \epsilon)} \leq \algdelta
   \end{align*}

   We can now give a lower bound to the probability of the event $\event(s, \epsilon)$. Let 
   
   \begin{align}
      \auxevent = \left\lbrace \norm{ \QFuncHat_s^\epsilon - \QFunc_s }_\infty < \zeta(\epsilon) \right\rbrace 
   \end{align}
   
   We have:
   
   \begin{align*}
      \PP{\event(s, \epsilon)} & \geq \PP{ \auxevent \cap \subevent(s,\epsilon) } \\
      & = \PP{\auxevent \Big|  \subevent(s,\epsilon)} \PP{\subevent(s,\epsilon)} \\
      & = \pa{1 - \PP{\auxevent^\complement \Big|  \subevent(s,\epsilon)} }\PP{\subevent(s,\epsilon)} \\
      & \geq \PP{\subevent(s,\epsilon)} - \algdelta \\
      & \geq 1  - \algdelta \nsamplev(\epsilon, \algdelta)
   \end{align*}
   since 
   
   \begin{align*}
      \PP{\subevent(s,\epsilon)} & = 1 - \PP{\subevent(s,\epsilon)^\complement} \\
      & = 1 - \PP{\bigcup_{(z,e) \in \params(s,\epsilon)} \event(z,e)^\complement } \\
      & \geq 1 - \sum_{(z,e) \in \params(s,\epsilon)} \PP{\event(z,e)^\complement} \\
      & \geq 1 - \algdelta \sum_{(z,e) \in \params(s,\epsilon)} \nsamplev(e, \algdelta) \quad \mbox{  by induction hypothesis (iii)} \\
      & = 1 - \algdelta ( \nsamplev(\epsilon, \algdelta) - 1)
   \end{align*}

    This proves {\bf (iii)}. Now, let's prove {\bf (i)}. 
   
   For any event $\auxevent$, we write
   
   \begin{align*}
   \EE[\auxevent]{\cdot} = \EE{ \cdot \Big|\auxevent}
   \end{align*}
   
   
   \paragraph{Case 1. }We start with case 1, $\barepsilon \leq \epsilon < \frac{1+M_\lambda}{1-\gamma}$, where $\zeta(\epsilon) = \epsilon$ and 
   
   \begin{align}
      \ValueFuncH_\epsilon(s) = \stateFunc{s}(\QFuncHat_s^\epsilon)
   \end{align}
   
   We have:
   
   \begin{align*}
   \abs{\EE[\event(s,\epsilon)]{\ValueFuncH_\epsilon(s)} - \ValueFunc(s)} & = \abs{ \EE[\event(s,\epsilon)]{\stateFunc{s}(\QFuncHat_s^\epsilon) - \stateFunc{s}(\QFunc_s) } } \\
   & \leq \EE[\event(s,\epsilon)]{ \abs{ \stateFunc{s}(\QFuncHat_s^\epsilon) - \stateFunc{s}(\QFunc_s)  } } \\
   & \leq \EE[\event(s,\epsilon)]{ \norm{\QFuncHat_s^\epsilon(a) - \QFunc_s(a)}_\infty } \\
   & \leq \zeta(\epsilon) = \epsilon
   \end{align*}
   and {\bf(i)} is verified for case 1. 
   
   \paragraph{Case 2. }Consider now the case 2, $\epsilon < \barepsilon$, where $\zeta(\epsilon) = \sqrt{\barepsilon\epsilon}$.
   
   Let $A$ be the action following the distribution $\frac{\nabla\stateFunc{\svar}\pa{\QFuncHat_s^\epsilon}}{\norm{\nabla\stateFunc{\svar}\pa{\QFuncHat_s^\epsilon}}_1}$, and let the reward $R_{s,A}$ and the state $Z_{s,A}$ be the random variables associated to the call to the generative model with parameters $(s, A)$. Let $\hat{v} = \estV\pa{Z_{s,A}, \epsilon/\sqrt{\gamma}}$.  The output in this case is given by
   
   \begin{align}
   \ValueFuncH_\epsilon(s) = \stateFunc{\svar}\pa{\QFuncHat_s^\epsilon } - (\QFuncHat_s^\epsilon)\transpose\nabla\stateFunc{\svar}\pa{\QFuncHat_s^\epsilon } + (\rewardrand + \gamma\hat{v}) \norm{\nabla\stateFunc{\svar}\pa{\QFuncHat_s^\epsilon }}_1
   \end{align}

   Let 
   \begin{align*}
   \QFunc_s(A) & = \EE[\event(s,\epsilon)]{R_{s,A} + \gamma \ValueFunc(Z_{s,A}) | A, \QFuncHat_s^\epsilon} \\
   & = \EE[\event(s,\epsilon)]{R_{s,A} + \gamma \ValueFunc(Z_{s,A}) | A} \quad
   \end{align*}
   and let
   
   \begin{align}
   \Vtilde(s) = \EE[\event(s,\epsilon)]{\stateFunc{\svar}\pa{\QFuncHat_s^\epsilon } -(\QFuncHat_s^\epsilon) \transpose\nabla\stateFunc{\svar}\pa{\QFuncHat_s^\epsilon } + \QFunc_s(A) \norm{\nabla\stateFunc{\svar}\pa{\QFuncHat_s^\epsilon }}_1}  
   \end{align}
   
   We have
   
   \begin{align*}
   & \abs{ \EE[\event(s,\epsilon)]{\ValueFuncH_\epsilon(s)} -  \Vtilde(s)} \\
   & \overeq{(a)} \gamma \abs{\EE[\event(s,\epsilon)]{ \EE[\event(s,\epsilon)]{ \estV\pa{Z_{s,A}, \frac{\epsilon}{\sqrt{\gamma}}} -  \ValueFunc(Z_{s,A}) \Big| A, \QFuncHat_s^\epsilon, Z_{s,A}} \norm{\nabla\stateFunc{\svar}\pa{\QFuncHat_s^\epsilon }}_1}} \\
   & \overeq{(b)}  \gamma \abs{\EE[\event(s,\epsilon)]{ \pa{\EE[\event(s,\epsilon)]{ \estV\pa{Z_{s,A}, \frac{\epsilon}{\sqrt{\gamma}}} \Big| A, \QFuncHat_s^\epsilon, Z_{s,A}} -  \ValueFunc(Z_{s,A}) }} \norm{\nabla\stateFunc{\svar}\pa{\QFuncHat_s^\epsilon }}_1} \\
   & \overleq{(c)}  \gamma \EE[\event(s,\epsilon)]{ \abs{\EE[\event(s,\epsilon)]{ \estV\pa{Z_{s,A}, \frac{\epsilon}{\sqrt{\gamma}}}  \Big|A, \QFuncHat_s^\epsilon, Z_{s,A}} -  \ValueFunc(Z_{s,A}) }}  \\
   & \overeq{(d)}  \gamma \EE[\event(s,\epsilon)]{ \abs{\EE[\event(Z_{s,A},\epsilon/\sqrt{\gamma})]{ \estV\pa{Z_{s,A}, \frac{\epsilon}{\sqrt{\gamma}}}  \Big|Z_{s,A}} -  \ValueFunc(Z_{s,A}) }}  \\
   & \overleq{(e)} \gamma \frac{\epsilon}{\sqrt{\gamma}} = \sqrt{\gamma}\epsilon
   \end{align*}
   which is justified by the following points:
   
   \begin{enumerate}[(a)]
      \item The reward depend only on $s, a$ and law of total expectation;
      \item $\ValueFunc(Z_{s,A})$ is a function of $Z_{s,A}$ and no other random variable;
      \item Jensen's inequality and the fact that $\normm{\nabla\stateFunc{\svar}\pa{\QFuncHat_s^\epsilon }}_1 \leq 1$;
      \item Given $Z_{s, A}$, the term $\estV\pa{Z_{s,A}, \frac{\epsilon}{\sqrt{\gamma}}}$ depends on $\event(s,\epsilon)$ only through $\event(Z_{s,A},\epsilon/\sqrt{\gamma})$;
      \item Induction hypothesis {\bf (i)}.
   \end{enumerate}

   Now, $\EE[\event(s,\epsilon)]{\QFunc_s(A)\norm{\nabla\stateFunc{\svar}\pa{\QFuncHat_s^\epsilon }}_1 }$ can be written as
   
   \begin{align*}
   &  \EE[\event(s,\epsilon)]{\QFunc_s(A)\norm{\nabla\stateFunc{\svar}\pa{\QFuncHat_s^\epsilon }}_1 } \\
   & = \EE[\event(s,\epsilon)]{\EE[\event(s,\epsilon)]{\QFunc_s(A) \Big| \QFuncHat_s^\epsilon } \norm{\nabla\stateFunc{\svar}\pa{\QFuncHat_s^\epsilon }}_1 } \\
   & = \EE[\event(s,\epsilon)]{ \QFunc_s\transpose\nabla\stateFunc{\svar}\pa{\QFuncHat_s^\epsilon }  }
   \end{align*}
   so that $\Vtilde(s)$ is given by
   
   \begin{align}
   \Vtilde(s) = \EE[\event(s,\epsilon)]{\stateFunc{\svar}\pa{\QFuncHat_s^\epsilon } + (\QFunc_s - \QFuncHat_s^\epsilon) \transpose\nabla\stateFunc{\svar}\pa{\QFuncHat_s^\epsilon }}
   \end{align}
   
   Finally, we bound the difference between $\Vtilde(s)$ and $\ValueFunc(s)$:
   
   \begin{align*}
   \abs{ \Vtilde(s) - \ValueFunc(s)} & \leq \EE[\event(s,\epsilon)]{\abs{\stateFunc{\svar}\pa{\QFuncHat_s^\epsilon } + (\QFunc_s - \QFuncHat_s^\epsilon) \transpose\nabla\stateFunc{\svar}\pa{\QFuncHat_s^\epsilon } - \ValueFunc(s) }} \\ 
   & \leq L \EE[\event(s,\epsilon)]{\norm{\QFunc_s - \QFuncHat_s^\epsilon}_2^2} \\
   & \overleq{(a)} KL \EE[\event(s,\epsilon)]{\norm{\QFunc_s - \QFuncHat_s^\epsilon}_\infty^2} \\
   & \leq KL \zeta(\epsilon)^2 \\
   & = KL \barepsilon \epsilon \\
   & = (1-\sqrt{\gamma})\epsilon
   \end{align*}
   by using the fact that we are on $\event(s, \epsilon)$ and (a) uses the fact that for all $x \in \R^K$, $\norm{x}_2^2 \leq K \norm{x}_\infty^2$. 
   
   We can now prove {\bf (i)} for case 2:
   
   \begin{align}
      \abs{ \EE[\event(s,\epsilon)]{\ValueFuncH_\epsilon(s)} -  \ValueFunc(s)} & \leq \abs{ \EE[\event(s,\epsilon)]{\ValueFuncH_\epsilon(s)} -  \Vtilde(s)} + \abs{ \Vtilde(s) - \ValueFunc(s)} \\
      & \leq \sqrt{\gamma}\epsilon + (1-\sqrt{\gamma})\epsilon = \epsilon 
   \end{align}

   Finally, let's prove {\bf (ii)}.

   \paragraph{Case 1.} In this case, $\ValueFuncH_\epsilon(s) = \stateFunc{\svar}(\QFuncHat_s^\epsilon)$ with $\normm{\QFuncHat_s^\epsilon}_\infty \leq (1+\supFsZero)/(1-\gamma)$, since each component of $\QFuncHat_s^\epsilon$ is clipped and lie in the interval $\left[0,\frac{1+\supFsZero}{1-\gamma}\right]$. The assumptions on $\stateFunc{\svar}$ imply that $\abs{\ValueFuncH_\epsilon(s)} \leq \frac{1+\supFsZero}{1-\gamma} \leq C_\gamma$.
   
   
   

   \paragraph{Case 2.} In this case, we have:

    \begin{align*}
   \abs{\ValueFuncH_\epsilon(s)} & \leq \abs{\stateFunc{\svar}\pa{\QFuncHat_s^\epsilon } - (\QFuncHat_s^\epsilon)\transpose\nabla\stateFunc{\svar}\pa{\QFuncHat_s^\epsilon }} + \abs{\rewardrand + \gamma\hat{v}}\norm{\nabla\stateFunc{\svar}\pa{\QFuncHat_s^\epsilon }}_1  \\
   &  \leq 2\norm{\QFuncHat_s^\epsilon}_\infty + \supFsZero + 1 + \gamma C_\gamma \\
   & \leq \frac{2(1 +\supFsZero)}{1-\gamma}+ \supFsZero + 1 + \gamma C_\gamma \\
   & \leq C_\gamma
    \end{align*}  
    since $\abs{\hat{v}} \leq C_\gamma$ by induction hypothesis {\bf (ii)}.

This proves {\bf (ii)} for case 2:

   \begin{equation}
      \PP{ \abs{\ValueFuncH(s)} \leq C_\gamma  \Big| \event(s,\epsilon) }  = 1
   \end{equation}
\end{proof}

Now, we can prove Theorem \ref{thm:consistency}, which is restated as follows:

\begin{theorem*}
   Let $\ValueH{\svar}$ be the output of $\ouralgo(s, \epsilon, \algdelta)$. For any state $\svar\in\states$ and $\epsilon, \algdelta > 0$, 
   \[\PP{\abs{\ValueH{\svar} - \Value{\svar}} > \epsilon} \le \algdelta\samcomplex{\epsilon}{\algdelta}.\] 
\end{theorem*}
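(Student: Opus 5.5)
The plan is to reduce the statement to Lemma~\ref{lemma:consistency} applied to the single call to $\estQ(s,\epsilon)$ made by \ouralgo. That call is structurally identical to the $\estQ$ call inside $\estV(s,\epsilon)$ in Case~1, with $\zeta = \epsilon$. Write $\QFuncHat_s = \estQ(s,\epsilon)$ and $\ValueH{s} = \stateFunc{s}(\QFuncHat_s)$.

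Define the analogue of $\event(s,\epsilon)$ for this top-level call,
\[
\auxevent^\star = \left\lbrace \norm{\QFuncHat_s - \QFunc_s}_\infty \le \epsilon \right\rbrace \cap \bigcap_{a,k} \event\pa{Z^{(k)}_{s,a}, \epsilon/\sqrt{\gamma}} .
\]
Since $\stateFunc{s}$ is $1$-Lipschitz with respect to $\norm{\cdot}_\infty$ and $\ValueFunc(s) = \stateFunc{s}(\QFunc_s)$, on $\auxevent^\star$ we have $\abs{\ValueH{s} - \ValueFunc(s)} \le \epsilon$. Hence $\PP{\abs{\ValueH{s} - \ValueFunc(s)} > \epsilon} \le \PP{(\auxevent^\star)^\complement}$, and it remains to bound this probability.

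For that, repeat verbatim the argument used for \textbf{(iii)} in the proof of Lemma~\ref{lemma:consistency}; the only difference is that no extra sampled action appears at the top level. First, by Lemma~\ref{lemma:consistency}~\textbf{(iii)} and a union bound, the subevent $\subevent = \bigcap_{a,k}\event(Z^{(k)}_{s,a},\epsilon/\sqrt{\gamma})$ satisfies
\[
\PP{\subevent^\complement} \le \algdelta \sum_{a,k} \nsamplev(\epsilon/\sqrt{\gamma},\algdelta) .
\]
Second, conditionally on $\subevent$, the $q^k_s(a)$ are independent (same characteristic-function argument) and bounded by $C_\gamma$ (Lemma~\ref{lemma:consistency}~\textbf{(ii)}). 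Their conditional mean is within $\sqrt{\gamma}\epsilon$ of $\QFunc_s(a)$ by \textbf{(i)}. Hoeffding's inequality with the choice of $N(\epsilon)$ then gives deviation at most $(1-\sqrt{\gamma})\epsilon$ with probability at least $1-\algdelta/K$ per action. Clipping only helps (Fact~\ref{fact:clipping}), and a union bound over actions yields $\PP{\norm{\QFuncHat_s - \QFunc_s}_\infty > \epsilon \mid \subevent} \le \algdelta$. Combining the two steps,
\[
\PP{(\auxevent^\star)^\complement} \le \algdelta\Bigl(1 + \sum_{a,k}\nsamplev(\epsilon/\sqrt{\gamma},\algdelta)\Bigr),
\]
where the parenthesised quantity equals the total number of $\estV$ calls plus one. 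This is at most $\samcomplex{\epsilon}{\algdelta}$ up to the bookkeeping convention used in the sample-complexity section, where the oracle calls are counted as $\estV$ calls and $\samcomplex{\epsilon}{\algdelta}$ is of order $K N(\epsilon)\,\nsamplev$.

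The main obstacle is this final accounting. One must check that $1 + K N(\epsilon)\,\nsamplev(\epsilon/\sqrt{\gamma},\algdelta) \le \samcomplex{\epsilon}{\algdelta}$ under the paper's counting; each oracle call in $\estQ$ is paired with one $\estV$ call, so the sum is at most the number of oracle calls, plus one for the top level. If the extra $1$ is not absorbed, I would observe that there is at least one oracle call more than recursive leaves of depth zero, or simply accept a harmless factor. The second, softer point is applying Lemma~\ref{lemma:consistency} at $e = \epsilon/\sqrt{\gamma}$. This requires $e \ge (1+M_\lambda)\sqrt{\gamma}^h/(1-\gamma)$ for some $h$, which always holds for positive $e$. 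The case $\epsilon \ge (1+M_\lambda)/(1-\gamma)$ is trivial because then $\event = \Omega$.
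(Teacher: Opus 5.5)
Your route is the paper's: treat the top-level call $\estQ(s,\epsilon)$ as a Case-1 node with $\zeta=\epsilon$, rerun the argument for (iii) of Lemma~\ref{lemma:consistency}, and conclude with the $1$-Lipschitz property of $\stateFunc{s}$. Your event $\auxevent^\star$ is more careful than the paper's. The paper's $\auxevent$ contains only the sub-events $\event(Z^{(k)}_{s,a},\epsilon/\sqrt{\gamma})$, and the paper then asserts $\norm{\QFuncHat_s-\QFunc_s}_\infty\le\epsilon$ on $\auxevent$. That bound only holds with conditional probability at least $1-\algdelta$, so the paper silently drops exactly the extra $\algdelta$ you are worried about.

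That loose end is a real gap in your write-up: as executed, your argument proves $\algdelta\bigl(\samcomplex{\epsilon}{\algdelta}+1\bigr)$, not $\algdelta\,\samcomplex{\epsilon}{\algdelta}$. Read literally, $\samcomplex{\epsilon}{\algdelta}=KN(\epsilon)\,\nsamplev(\epsilon/\sqrt{\gamma},\algdelta)$ exactly, since every $\estV$ call is preceded by exactly one oracle call. So the $+1$ is not absorbed, and accepting a harmless factor does not give the stated inequality.

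Your leaf remark is the right fix, but it must go inside the induction, because Lemma~\ref{lemma:consistency}\,(iii) as stated hides the slack. Calls with $e\ge(1+\supFsZero)/(1-\gamma)$ have $\event=\Omega$ and cannot fail, yet (iii) charges each of them $\algdelta$. The same induction proves the sharper bound $\PP{\event(s,\epsilon)^\complement}\le\algdelta\bigl(\nsamplev(\epsilon,\algdelta)-1\bigr)$:
\begin{itemize}
\item At a leaf the bound is $0$.
\item At an internal node you get $\algdelta\bigl(1+\sum_{(z,e)\in\params(s,\epsilon)}(\nsamplev(e,\algdelta)-1)\bigr)=\algdelta\bigl(\nsamplev(\epsilon,\algdelta)-\abs{\params(s,\epsilon)}\bigr)$, and $\abs{\params(s,\epsilon)}\ge 1$.
\end{itemize}
Plugging this into your top-level decomposition gives
\[
\algdelta\bigl(1+KN(\epsilon)(\nsamplev(\epsilon/\sqrt{\gamma},\algdelta)-1)\bigr)=\algdelta\bigl(\samcomplex{\epsilon}{\algdelta}+1-KN(\epsilon)\bigr)\le\algdelta\,\samcomplex{\epsilon}{\algdelta},
\]
which is the statement.
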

\begin{proof}
   Let $\QFuncHat_s = \estQ(s, \epsilon)$. We have $\ValueFuncH(s) = \stateFunc{s}(\QFuncHat_s)$. As in the proof of Lemma \ref{lemma:consistency},  let the reward $R_{s,a}^{(k)}$ and state $Z_{s,a}^{(k)}$ be the random variables associated to the $k$-th call to the generative model used to compute $\QFuncHat_s(a)$ in $\estQ$, for $k \in \{1, \cdots, N(\epsilon)\}$.
   
   We have:
   
   \begin{align}
      \QFuncHat_s(a) = \frac{1}{N(\epsilon)}\sum_{k=1}^{N(\epsilon)} R_{s,a}^{(k)} + \gamma  \estV\pa{Z_{s,a}^{(k)}, \epsilon/\sqrt{\gamma}}
   \end{align}
   
   Consider the event $\auxevent$ defined by:
   
   \begin{align}
      \auxevent = \bigcap_{k=1}^{N(\epsilon)} \event\pa{Z_{s,a}^{(k)}, \frac{\epsilon}{\sqrt{\gamma}}}
   \end{align}
   By the same arguments as in the proof of Lemma \ref{lemma:consistency}, we have:
   
   \begin{itemize}
      \item In $\auxevent$, we have $\normm{\QFuncHat_s - \QFunc_s}_\infty \leq \epsilon$;
      \item $\PP{\auxevent} \geq 1 - \algdelta N(\epsilon) \nsamplev(\epsilon,\algdelta) = 1 - \algdelta\samcomplex{\epsilon}{\algdelta}$.
   \end{itemize}

   This implies the result, since $\abs{\ValueH{\svar} - \Value{\svar}} \leq \norm{\QFuncHat_s - \QFunc_s}_\infty $.
   
   Now, for every $\epsilon > 0$ and every $\pacdelta > 0$, we need to be able to find a value of $\algdelta$ such that $\algdelta\samcomplex{\epsilon}{\algdelta} \leq \pacdelta$. That is, given $\epsilon$ and $\pacdelta$, we need to find $\algdelta$ such that 
   
    \begin{align}
        \algdelta \frac{c_1}{\epsilon^4}\log\pa{\frac{c_2}{\algdelta}} \left[ c_3\log\pa{\frac{c_4}{\epsilon}} \right]^{ \log_2\pa{c_5 \pa{ \log\pa{\frac{c_2}{\algdelta}}}}} \leq \pacdelta.
    \end{align}
    
    Such value exists, since the term on the LHS tends to $0$ as $\algdelta \to 0$, and it depends on $\epsilon$. We will show that this dependence is polynomial when $\epsilon \to 0$. 
    
    Let $\algdelta = \epsilon^5$. There exists a value $\tilde{\epsilon}$ that depends on $\pacdelta$ such that:
    
       \begin{align}
        \forall \epsilon \leq \tilde{\epsilon}, \quad \epsilon^5 \frac{c_1}{\epsilon^4}\log\pa{\frac{c_2}{\epsilon^5}} \left[ c_3\log\pa{\frac{c_4}{\epsilon}} \right]^{ \log_2\pa{c_5 \pa{ \log\pa{\frac{c_2}{\epsilon^5}}}}} \leq \pacdelta.
    \end{align}
    since the term on the LHS tends to $0$ as $\epsilon \to 0$, as a consequence of Proposition \ref{prop:useful-limit-1}. 
    
    Putting it all together, we can choose $\algdelta$ as folllows:
    
    \begin{align}
    \label{eq:delta-choice}
        \algdelta = %
        \begin{cases}
            \auxdelta \mbox{ such that } \auxdelta \frac{c_1}{\epsilon^4}\log\pa{\frac{c_2}{\auxdelta}} \left[ c_3\log\pa{\frac{c_4}{\epsilon}} \right]^{ \log_2\pa{c_5 \pa{ \log\pa{\frac{c_2}{\auxdelta}}}}} \leq \pacdelta, \quad \mbox{ if } \epsilon > \tilde{\epsilon}, \\ 
            \epsilon^5, \quad \mbox{ if } \epsilon \leq \tilde{\epsilon}
        \end{cases}
    \end{align}
    which is $\cO{\epsilon^5}$.

 Proposition \ref{prop:delta-eq-epsilon-5} implies that, for this choice of $\algdelta$, the sample complexity is still of order $\cO{ 1/\epsilon^{4+c}}$ for any $c > 0$.

\end{proof}

\section{Auxiliary results}

\begin{fact}
   \label{fact:fs_leq_max}
   For all $s \in \states$ and all $x \in \R^K$, we have $F_s(x) \leq \norm{x}_\infty + \sup_s |F_s(0)|$.
\end{fact}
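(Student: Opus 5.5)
The statement is Fact~\ref{fact:fs_leq_max}: $F_s(x) \le \normm{x}_\infty + \sup_s\abs{F_s(0)}$. The plan is to compare $F_s(x)$ with $F_s(0)$ using only the gradient assumptions listed in the preliminaries. No smoothness is needed, just the first-order bound on the gradient.

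First I fix $s$ and $x\in\R^K$ and consider the segment $t\mapsto tx$ for $t\in[0,1]$. Since $F_s$ is differentiable, the map $g(t)=F_s(tx)$ is differentiable with $g'(t)=x\transpose\nabla F_s(tx)$. By the mean value theorem there is $\theta\in(0,1)$ with
\[
F_s(x)-F_s(0)=x\transpose\nabla F_s(\theta x).
\]
I use the mean value theorem rather than the fundamental theorem of calculus because it avoids needing integrability of $g'$.

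Next I bound the right-hand side with Hölder's inequality. Using $\normm{\nabla F_s(\theta x)}_1\le 1$ from assumption 2, this gives
\[
\abs{x\transpose\nabla F_s(\theta x)}\le\normm{x}_\infty\normm{\nabla F_s(\theta x)}_1\le\normm{x}_\infty.
\]
This is exactly the 1-Lipschitz property in the $\infty$-norm already noted in the preliminaries, so I may simply cite it: $\abs{F_s(x)-F_s(0)}\le\normm{x}_\infty$.

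Finally I conclude with
\[
F_s(x)\le F_s(0)+\normm{x}_\infty\le\abs{F_s(0)}+\normm{x}_\infty\le \normm{x}_\infty+\sup_{s'}\abs{F_{s'}(0)}.
\]
There is no real obstacle here; the only care point is justifying the Lipschitz bound from the gradient-norm assumption, which is handled by the mean value step above. The same argument also yields the two-sided bound $\abs{F_s(x)}\le\normm{x}_\infty+\supFsZero$, which is what is used in part (ii) of Lemma~\ref{lemma:consistency}.
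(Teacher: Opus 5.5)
Your proof is correct and is essentially the paper's argument: the paper bounds $\abs{F_s(x)} \le \abs{F_s(x)-F_s(0)} + \abs{F_s(0)}$ and invokes the $\infty$-norm 1-Lipschitz property stated in the preliminaries, which you additionally justify via the mean value theorem and H\"older's inequality with $\normm{\nabla F_s}_1 \le 1$. As you note, the same argument gives the two-sided bound on $\abs{F_s(x)}$, which is what the paper actually proves and what part (ii) of Lemma~\ref{lemma:consistency} uses.
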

\begin{proof}
   By the assumptions on $F_s$, we have:
   \begin{align}
   |F_s(x)| & = |F_s(x) - F_s(0) + F_s(0)| \leq|F_s(x) - F_s(0)| + |F_s(0)| \\
   & \leq \norm{x - 0}_\infty  + |F_s(0)| \leq  \norm{x}_\infty + \sup_s |F_s(0)|.
   \end{align}
\end{proof}


\begin{fact}
    \label{fact:clipping}
    Let $x, q \in \R^d$ be such that $0 \leq q_i \leq c$ for all $i$. Let $\tilde{x} = \clip{c}{x}$. Then, $\norm{\tilde{x}-q}_\infty \leq \norm{x-q}_\infty$.
\end{fact}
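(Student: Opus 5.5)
The statement to prove is Fact~\ref{fact:clipping}: clipping a vector to the box $[0,c]^d$ cannot increase its $\ell_\infty$ distance to a point $q$ already in that box. Because both $\clip{c}{\cdot}$ and $\norm{\cdot}_\infty$ act coordinatewise, I will reduce to a scalar statement. It suffices to show that for every coordinate $i$,
\[
\abs{\tilde{x}_i - q_i} \leq \abs{x_i - q_i}.
\]
Taking the maximum over $i$ then gives the claim.

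For the scalar statement, fix $i$ and write $y = x_i$, $p = q_i \in [0,c]$, with $\tilde{y} = \min(c, \max(0, y))$. This is the form actually implemented in $\estQ$. The definition of $\clip{c}{\cdot}$ in the text has a typo ($-c < x_i < c$ instead of $0 < x_i < c$), and I will read it as the projection onto $[0,c]$. I will split into three cases.

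\textbf{Case $0 \le y \le c$.} Here $\tilde{y} = y$, so equality holds.

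\textbf{Case $y < 0$.} Here $\tilde{y} = 0$, and $p \geq 0 > y$. Hence $\abs{\tilde{y} - p} = p \leq p - y = \abs{y - p}$.

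\textbf{Case $y > c$.} Here $\tilde{y} = c$, and $p \leq c < y$. Hence $\abs{\tilde{y} - p} = c - p \leq y - p = \abs{y - p}$.

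Alternatively, the scalar inequality is the statement that projection onto the convex interval $[0,c]$ is $1$-Lipschitz and fixes $p$. I will present the elementary case analysis because it is self-contained.

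There is no real obstacle here. The only point needing care is to interpret the clip function consistently with the algorithm, as projection onto $[0,c]$ rather than onto $[-c,c]$, since the case analysis relies on $p \ge 0$.
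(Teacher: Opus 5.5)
Your proof is correct and matches the paper's argument: the paper also reduces to the coordinatewise inequality $\abs{\tilde{x}_i - q_i} \leq \abs{x_i - q_i}$ using $0 \le q_i \le c$, and your three-case analysis spells out what the paper asserts in one line. Reading the clip as the projection onto $[0,c]$, consistent with $\estQ$, is the right way to resolve the overlapping cases in the printed definition.
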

\begin{proof}
    For any $i\in\{1,\ldots, d\}$, we have $\abs{\tilde{x}_i - q_i} \leq \abs{x_i - q_i}$, since $0\le q_i\leq c$. The result follows.
\end{proof}

\begin{proposition}
   \label{prop:useful-limit-1}
   $\forall a,b,c >0$
   \begin{align*}
      \lim_{x\to\infty} \frac{1}{x^c}\exp\pa{ a [\log\log(x^b)]^2} = 0
   \end{align*}
\end{proposition}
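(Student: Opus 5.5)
The plan is to take logarithms and compare growth rates, since everything is positive. With $x>1$ large enough that $\log(x^b)>1$, let
\[
f(x) \triangleq \log\pa{\frac{1}{x^c}\exp\pa{ a [\log\log(x^b)]^2}} = -c\log x + a\pa{\log\pa{b\log x}}^2 .
\]
It then suffices to prove $f(x)\to-\infty$ as $x\to\infty$, because $\exp(f(x))\to 0$ follows by continuity of $\exp$.

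Substitute $u=\log x$, so that $u\to\infty$ as $x\to\infty$. This gives $f = -cu + a(\log b + \log u)^2$. Expanding the square, the second term is at most $2a(\log b)^2 + 2a(\log u)^2$, using $(p+q)^2\le 2p^2+2q^2$. So we only need $(\log u)^2 = o(u)$.

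That step is standard: substituting $u=e^t$ turns $(\log u)^2/u$ into $t^2 e^{-t}\to 0$, for instance by L'H\^opital twice or because $e^t\ge t^3/6$. Hence for $u$ large, $2a(\log u)^2\le cu/2$. Then $f\le -cu/2 + 2a(\log b)^2\to-\infty$, which proves the claim.

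There is no real obstacle here. The only care needed is that $\log\log(x^b)$ is defined, i.e.\ we take $x^b>e$, and that $b<1$ (negative $\log b$) is harmless; the inequality $(p+q)^2\le 2p^2+2q^2$ handles it.
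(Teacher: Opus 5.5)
Your proof is correct and takes essentially the same route as the paper: you pass to the exponent and reduce the claim to $(\log u)^2 = o(u)$. The paper substitutes $u=\log(x^b)$ rather than $u=\log x$, so its exponent is directly $a(\log u)^2-\frac{c}{b}u$ and no $\log b$ cross term arises, which makes your $(p+q)^2\le 2p^2+2q^2$ step unnecessary; on the other hand, you prove the limit $(\log u)^2/u\to 0$, whereas the paper only asserts it.
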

\begin{proof}
   We have
   \begin{align*}
      \frac{1}{x^c}\exp\pa{ a [\log\log(x^b)]^2} & = \exp\pa{ a [\log\log(x^b)]^2 - c\log x }\\
      & = \exp\pa{ a [\log u ]^2 - \frac{c}{b} u  }, \quad \mbox{ by setting } u =  \log(x^b)
   \end{align*}
   And, for any $k > 0$, we have 
   
   \begin{align}
      \lim_{u\to\infty} \log^2 u - k u = -\infty.
   \end{align}
   which allows us to conclude. 
\end{proof}

\begin{proposition}
   \label{prop:delta-eq-epsilon-5}
   If we set $\algdelta = \algdelta(\epsilon) = \epsilon^5$, we have:
   \begin{align*}
   n(\epsilon, \algdelta(\epsilon)) = \mathcal{O}\pa{\frac{1}{\epsilon^{4+c}}}, \quad \forall c > 0
   \end{align*}
\end{proposition}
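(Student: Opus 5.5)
The approach is to substitute $\algdelta=\epsilon^5$ into the explicit bound of Theorem~\ref{thm:sample-complexity} and check that every factor beyond $1/\epsilon^4$ grows more slowly than any power $1/\epsilon^{c}$. One subtlety comes first. The constant $c_1$ contains $\nsamplev(\barepsilon,\algdelta)$, which itself depends on $\algdelta$. I would bound it with Proposition~\ref{prop:sample-complexity-large-epsilon}:
\[
\nsamplev(\barepsilon,\algdelta)\le \gamma^{\frac12 H(\barepsilon)(H(\barepsilon)-1)}\pa{\frac{2\alpha(\algdelta)}{\barepsilon^2}}^{H(\barepsilon)} .
\]
Here $H(\barepsilon)$ is a fixed integer that depends only on $K,L,\gamma$. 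The quantity $\alpha(\algdelta)$ is linear in $\log(2K/\algdelta)$. With $\algdelta=\epsilon^5$ this contribution is therefore $\cO{[\log(1/\epsilon)]^{H(\barepsilon)}}$, a polylogarithmic factor. We also need $\eta_2(\algdelta)\ge 0$, the condition of Lemma~\ref{lemma:sample-complexity}. This holds for $\epsilon$ small enough, because $\beta(\epsilon^5)\to\infty$.

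Next I would substitute $\algdelta=\epsilon^5$ into the remaining factors. The factor $\log(c_2/\epsilon^5)=\cO{\log(1/\epsilon)}$ is again polylogarithmic. The only delicate term is
\[
\left[c_3\log\pa{\tfrac{c_4}{\epsilon}}\right]^{\log_2\pa{c_5\log\pa{c_2/\epsilon^5}}}.
\]
Setting $x=1/\epsilon$, this equals $\exp\bigl(\log_2(c_5\log(c_2x^5))\cdot\log(c_3\log(c_4x))\bigr)$. For large $x$ its logarithm is at most $a[\log\log(x^b)]^2$ for suitable constants $a,b>0$. This follows because both $\log_2(c_5\log(c_2 x^5))$ and $\log(c_3\log(c_4x))$ are $\log\log x + \cO{1}$, so their product is $\le a'(\log\log x)^2$ for large $x$. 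We can then take $b=1$ and absorb the constants into $a$; any $b$ works if we adjust $a$. Proposition~\ref{prop:useful-limit-1} now gives that this factor is $o(x^{c/2})=o(\epsilon^{-c/2})$ for every $c>0$.

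Finally, the two polylogarithmic factors $[\log(1/\epsilon)]^{H(\barepsilon)+1}$ are also $o(\epsilon^{-c/2})$. Multiplying everything yields $n(\epsilon,\epsilon^5)=\cO{\epsilon^{-4}\cdot\epsilon^{-c}}$ as $\epsilon\to0$, which proves the claim. For $\epsilon$ bounded away from $0$ the bound is a finite constant, and in the regime $\epsilon\ge\barepsilon$ Proposition~\ref{prop:sample-complexity-large-epsilon} applies directly, so the asymptotic statement suffices.

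The main obstacle is the exponent term. It is the only factor that is not plainly polylogarithmic, and reducing it to the form of Proposition~\ref{prop:useful-limit-1} requires the careful comparison of both iterated logarithms with $\log\log x$ described above. A second point that is easy to overlook is the hidden $\algdelta$-dependence inside $c_1$, through $\nsamplev(\barepsilon,\algdelta)$.
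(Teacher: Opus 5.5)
Your proof is correct and follows the paper's route. Like the paper, you substitute $\algdelta=\epsilon^5$ into the bound of Lemma~\ref{lemma:sample-complexity} (equivalently Theorem~\ref{thm:sample-complexity}), rewrite the exponent term as the exponential of a product of iterated logarithms, apply Proposition~\ref{prop:useful-limit-1}, and absorb the remaining $\log(1/\epsilon)$ factors.

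You are in fact more careful than the paper. Its proof silently treats $\eta_1=\barepsilon^2\nsamplev(\barepsilon,\algdelta)$, and hence $c_1$, as a constant, whereas your use of Proposition~\ref{prop:sample-complexity-large-epsilon} shows this factor is only $\cO{[\log(1/\epsilon)]^{H(\barepsilon)}}$ under $\algdelta=\epsilon^5$, which closes that gap.
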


\begin{proof}
   We have:
   
   \begin{align*}
      \nsamplev(\epsilon, \algdelta(\epsilon)) & \leq \eta_1 \left[ \log_{\frac{1}{\gamma}}\pa{\frac{\bar{\epsilon}/\gamma}{\epsilon}} \right]^{\eta_2(\epsilon^3)} \frac{1}{\epsilon^2} \\
      & = \underbrace{\left[ \log_{\frac{1}{\gamma}}\pa{\frac{\bar{\epsilon}/\gamma}{\epsilon}} \right]^{\log_2\pa{k\log\pa{ \frac{2K}{\epsilon^3} }}}}_{ \mathrm{(A)}}  \frac{1}{\epsilon^2}
   \end{align*}
   where $k$ is a constant that does not depend on $\epsilon$. 
   The term (A) can be rewritten as: 
   
   \begin{align*}
      \left[ \log_{\frac{1}{\gamma}}\pa{\frac{\bar{\epsilon}/\gamma}{\epsilon}} \right]^{\log_2\pa{k\log\pa{ \frac{2K}{\epsilon^3} }}} & =  \left[c_1\log\pa{\frac{c_2}{\epsilon}}\right]^{ c_3 \log\left[ k \log\pa{\frac{c_4}{\epsilon^3}}  \right] }  \\
      & = \exp\left\lbrace  c_3 \log\left[ k \log\pa{\frac{c_4}{\epsilon^3}}\right] \log\pa{c_1\log\pa{\frac{c_2}{\epsilon}}}    \right\rbrace 
   \end{align*}
   which can be shown to be $\mathcal{O}\pa{\frac{1}{\epsilon^c}}$ for any $c > 0$ by applying proposition \ref{prop:useful-limit-1} after some algebraic manipulations.
   
   Hence,
   
   \begin{align*}
      \nsamplev(\epsilon, \algdelta(\epsilon)) = \frac{1}{\epsilon^2} \mathcal{O}\pa{\frac{1}{\epsilon^c}} = \mathcal{O}\pa{\frac{1}{\epsilon^{2+c}}}, \quad \forall c > 0.
   \end{align*}
   
   Since we have
   
   \begin{align*}
      \samcomplex{\epsilon}{\algdelta} &= N(\epsilon) \nsamplev(\epsilon, \algdelta)
   \end{align*}
   with $N(\epsilon) = \tilde{\mathcal{O}}\pa{1/\epsilon^2}$, this proves the result.
\end{proof}

\begin{corollary}
   \label{cor:delta-epsilon3}
   If we set $\algdelta = \algdelta(\epsilon) = \epsilon^5$, we have:
   \begin{align*}
   \lim_{\epsilon \to 0} \algdelta(\epsilon) n(\epsilon, \algdelta(\epsilon)) = 0
   \end{align*}
\end{corollary}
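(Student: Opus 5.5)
The corollary follows by combining Proposition~\ref{prop:delta-eq-epsilon-5} with the explicit factor $\algdelta(\epsilon)=\epsilon^5$. With this choice of $\algdelta$, Proposition~\ref{prop:delta-eq-epsilon-5} gives, for every $c>0$, a constant $C_c$ and a threshold $\epsilon_c>0$ such that
\[
n(\epsilon,\epsilon^5)\le C_c\,\epsilon^{-(4+c)}\qquad\text{for all } 0<\epsilon\le\epsilon_c .
\]
Multiplying by $\algdelta(\epsilon)=\epsilon^5$ then yields
\[
0\le \algdelta(\epsilon)\, n(\epsilon,\algdelta(\epsilon)) \le C_c\,\epsilon^{1-c}.
\]

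Fixing any $c\in(0,1)$, say $c=1/2$, the right-hand side is $C_{1/2}\,\epsilon^{1/2}$. This tends to $0$ as $\epsilon\to0$, and the limit follows by squeezing.

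The only step requiring care is checking that Proposition~\ref{prop:delta-eq-epsilon-5} really applies with $\algdelta=\epsilon^5$ for small $\epsilon$.

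\begin{itemize}
\item \emph{Condition on $\beta$.} The bound there relies on Lemma~\ref{lemma:sample-complexity}, which requires $\beta(\algdelta)\ge (1-\gamma)(1-\sqrt{\gamma})/(2\gamma KL)$. Since $\beta(\epsilon^5)$ grows like $\log(2K/\epsilon^5)$, this holds for all $\epsilon$ small enough.
\item \emph{Range of $\epsilon$.} The lemma is stated for $\epsilon\le\barepsilon$, and the limit only concerns $\epsilon\to0$.
\item \emph{Exponent.} The proof of Proposition~\ref{prop:delta-eq-epsilon-5} writes $\eta_2(\epsilon^3)$ while the choice is $\epsilon^5$. This only changes the constant inside $\log\log$, so the exponent $\log_2(k\log(2K/\epsilon^5))$ is still $O(\log\log(1/\epsilon))$.
\item \emph{Sub-polynomial factor.} By Proposition~\ref{prop:useful-limit-1}, the factor $[c_3\log(c_4/\epsilon)]^{O(\log\log(1/\epsilon))}=\exp(O((\log\log(1/\epsilon))^2))$ is $o(\epsilon^{-c})$ for every $c>0$.
\end{itemize}

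With these points verified, the squeeze argument above gives $\lim_{\epsilon\to0}\algdelta(\epsilon)\,n(\epsilon,\algdelta(\epsilon))=0$.
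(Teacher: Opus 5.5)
Your argument is correct and is exactly the paper's: the corollary follows from Proposition~\ref{prop:delta-eq-epsilon-5} by multiplying the $\mathcal{O}(\epsilon^{-(4+c)})$ bound by $\epsilon^5$ and fixing any $c\in(0,1)$. Your applicability checks go beyond what the paper writes; to make them complete, note also that $\eta_1=\barepsilon^2\nsamplev(\barepsilon,\algdelta)$ and $N(\epsilon)$ depend on $\algdelta=\epsilon^5$, but only polylogarithmically in $1/\epsilon$, so the conclusion is unaffected.
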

\begin{proof}
   It is an immediate consequence of proposition \ref{prop:delta-eq-epsilon-5} by taking $c \in ]0, 1[$.
\end{proof}

\section{On other smooth approximations of the max}
\label{sec:other-smooth-approx}

In this paper we focus on the $\logsumexp{\lambda}$ function as a smooth approximation to the maximum function. Yet our proof is more general and can handle any approximation of the max function which verifies the properties listed in Section~\ref{generalizations}. For instance let's consider the following regularization of the Bellman equation:
\begin{align}
\stateFunc{}(Q) &= \max_{(\pi_a)_{a\in\actions}} \sum_{a\in\actions} \pa{Q_a \cdot \pi_a + \lambda \sqrt{\pi_a}}  \end{align}

This smooth function is particularly interesting because it approximates the distribution of pulled armed of the UCB algorithm by taking $\lambda = 2c \cdot \sqrt{\frac{\ln(n)}{n}}$ (see \ref{UCB_like} and notice that $\pi_a^\star\cdot n$ approximates $n_a$). We show that this smooth approximation of the maximum verifies the assumptions made in Section~\ref{generalizations}. We have

\begin{align}
    \stateFunc{}(Q) &= \sum_{a\in\actions} \pa{Q_a \cdot \pi_a^\star + \lambda \sqrt{\pi_a^\star}}
\end{align}

and we can show that $\nabla_Q\stateFunc{}(Q) = \pi^\star$. Therefore point 1, 2 and 3 of Section~\ref{generalizations} are verified. Now by differentiating with respect to $\pi$ this time: 
\begin{align}
\label{UCB_like}
\forall a\in\actions \quad Q_a + \frac{\lambda}{2\sqrt{\pi_a^\star}} = U
\end{align}

where $U$ is the Lagrange multiplier. Using the fact that $\sum_{a\in\actions}{\pi_a^\star} = 1$, we get
\begin{align}
    \sum_{a\in\actions} \left(\frac{\lambda/2}{U - Q_a}\right)^2 = 1
\end{align}

Because $U > \max_a \pi_a^\star$ the derivative of the left side with respect to $U$ is positive for all $Q_a\in[0,(1+\supFsZero)/(1-\gamma)]^{|\actions|}$. Using the inverse function theorem we get that $U$ is differentiable with respect to Q and that $\pi_a^\star = \left(\frac{\lambda/2}{U - Q_a}\right)^2$ is also differentiable with respect to Q. Finally because $[0,(1+\supFsZero)/(1-\gamma)]^{|\actions|}$ is compact we can conclude that $F$ is $L$-smooth for some $L\geq0$ verifying point 4 of Section~\ref{generalizations}.

\section{Experimental validation of the theoretical results}

In this section, we present the experiments we made to verify the correctness of our sample complexity bounds (Theorem \ref{thm:sample-complexity}) and of our consistency results (Theorem \ref{thm:consistency}).

\subsection{Checking the sample complexity guarantee}

The key step for proving Theorem \ref{thm:sample-complexity} is using Lemma \ref{lemma:sample-complexity}, that bounds the number of calls to the generative model made by a call to $\estV(s, \epsilon)$. 

Figure \ref{fig:nsamplev-check} shows the simulated number of calls to the generative model made by $\nsamplev(\epsilon, \algdelta)$  as a function of $1/\epsilon$ and compares it to our theoretical bound in Lemma \ref{lemma:sample-complexity} and to the number of calls that would be required by a Sparse Sampling strategy, which corresponds to the bound in Proposition \ref{prop:sample-complexity-large-epsilon} extrapolated to all values of $\epsilon$. The simulated values where obtained by computing the following recurrence for several values of $\epsilon$:

\begin{align*}
    \nsamplev^{\mathrm{sim}}(\epsilon,\algdelta)  = \begin{cases}
    1 + \nsamplev^{\mathrm{sim}}\pa{\frac{\epsilon}{\sqrt{\gamma}}, \algdelta} + K N(\sqrt{\barepsilon\epsilon}) \nsamplev^{\mathrm{sim}}\pa{ \sqrt{\frac{\barepsilon\epsilon}{\gamma}} , \algdelta}, \mbox{ if } \epsilon < \barepsilon, \\
    \gamma^{\frac{1}{2}H(\epsilon)(H(\epsilon)-1)}\pa{\frac{2\alpha(\algdelta)}{\epsilon^2}}^{H(\epsilon)}, \mbox{ otherwise.}
    \end{cases}
\end{align*}

Figure \ref{fig:nsamplev-lambda} shows the mumber of calls to the generative model made by $\nsamplev$ as a function of the regularization parameter $\lambda$ in order to achieve a relative error of $0.01$\footnote{ We set $\epsilon = 0.01 V_{\lambda}^{\mathrm{max}}$, where $V_{\lambda}^{\mathrm{max}} = (1+\lambda\log K)/(1-\gamma)$ is an upper bound on the regularized value function.} and its ratio with respect to the number of calls that would be required by Sparse Sampling in the same setting. We see that fewer samples are required as the regularization increases. We also see that, for small $\lambda$, there is no advantage with respect to Sparse Sampling, but $\ouralgo$ has a very large advantage when the regularization $\lambda$ grows.

\begin{figure}[ht!]
    \centering
    \includegraphics[width=0.9\textwidth]{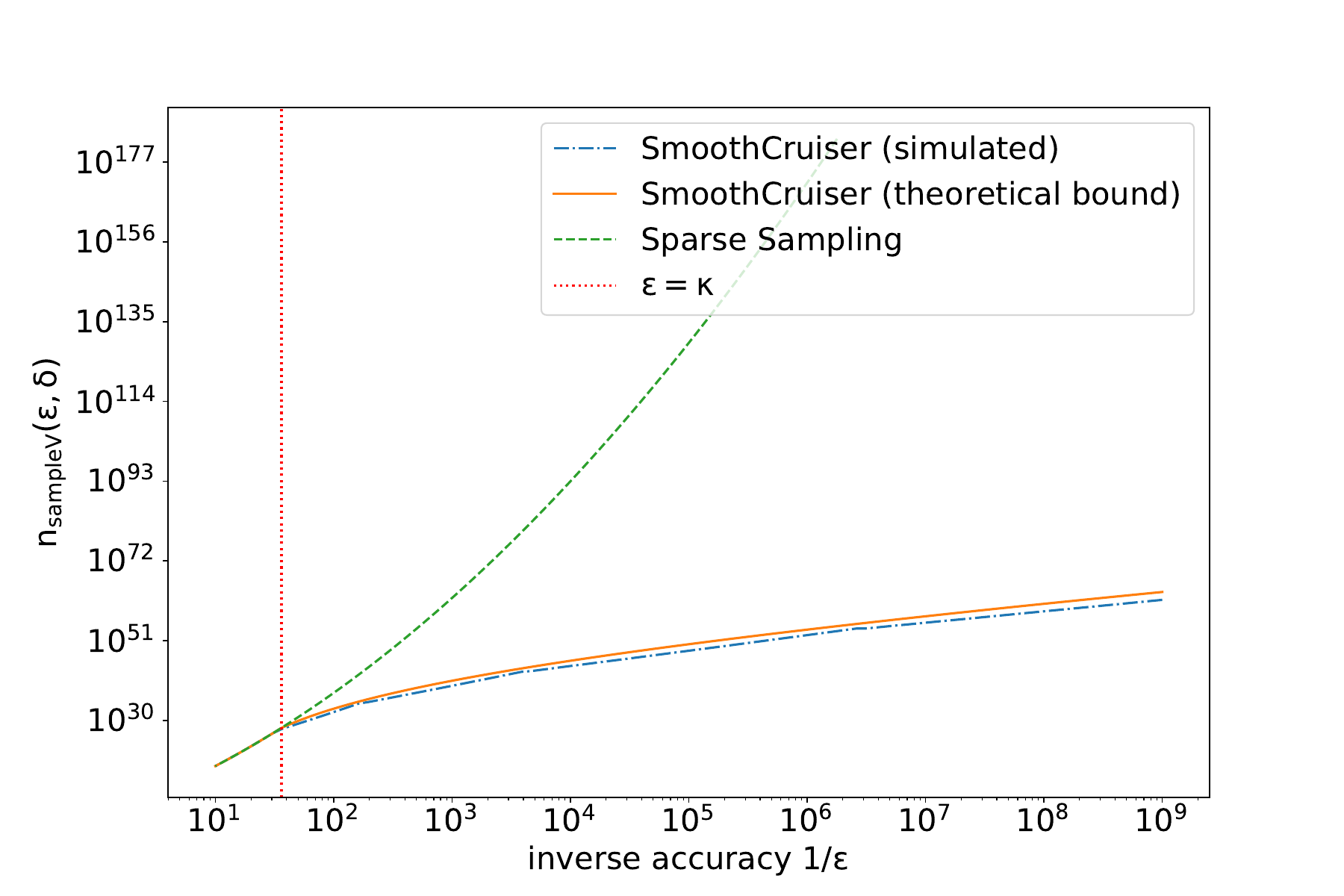}
    \caption{Simulated number of calls to the generative model made by $\nsamplev(\epsilon, \algdelta)$ as a function of $1/\epsilon$ compared to our theoretical bound (Lemma \ref{lemma:sample-complexity}) and to the number of calls that would be required by a Sparse Sampling strategy. The parameters used were: $\gamma=0.2$, $\algdelta=0.1$, $K=2$ and $\lambda=0.1$.}
    \label{fig:nsamplev-check}
\end{figure}

\begin{figure*}[ht!]
    \centering
    \begin{subfigure}[t]{0.49\textwidth}
        \centering
        \includegraphics[width=\textwidth]{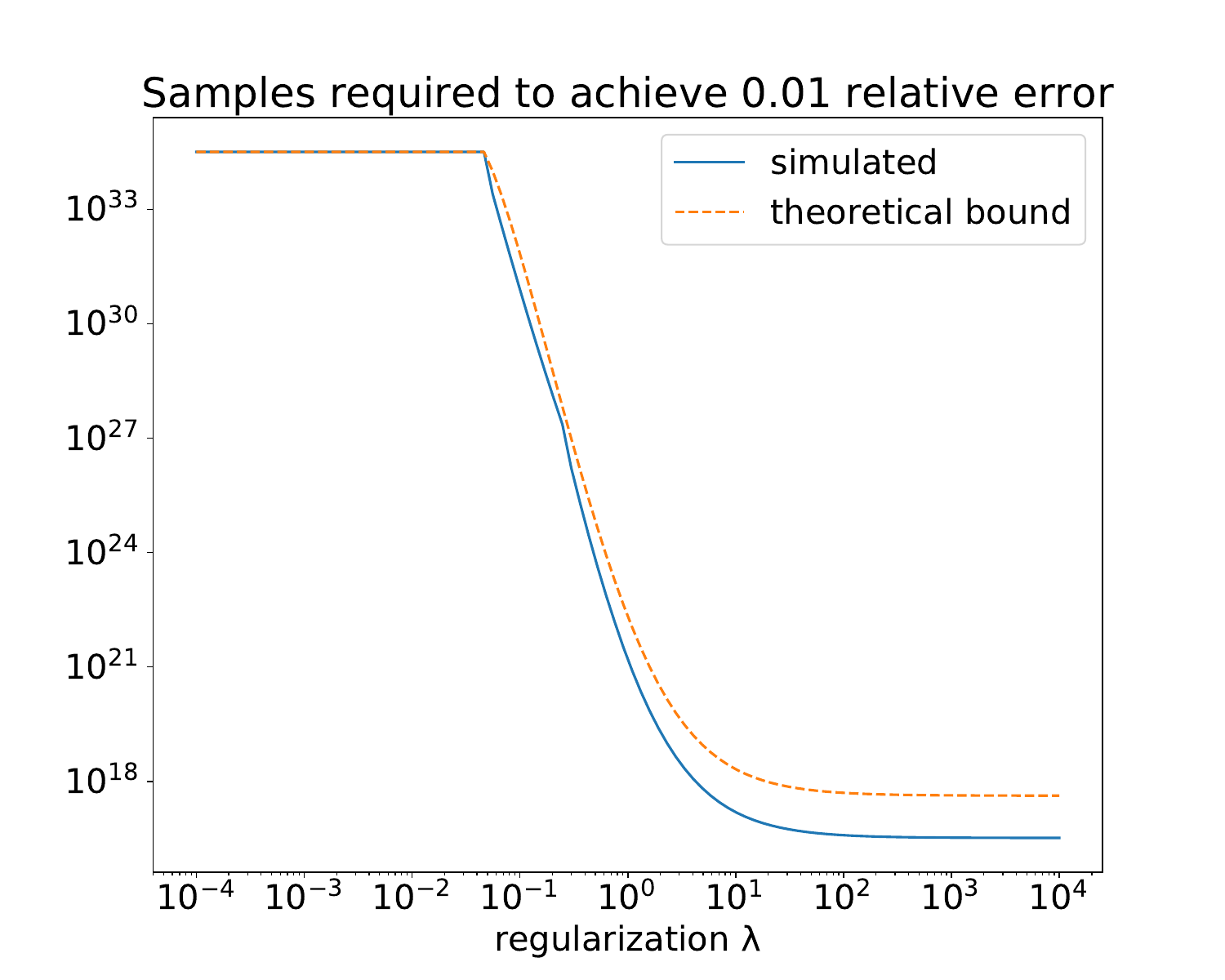}
    \end{subfigure}%
    ~ 
    \begin{subfigure}[t]{0.49\textwidth}
        \centering
        \includegraphics[width=\textwidth]{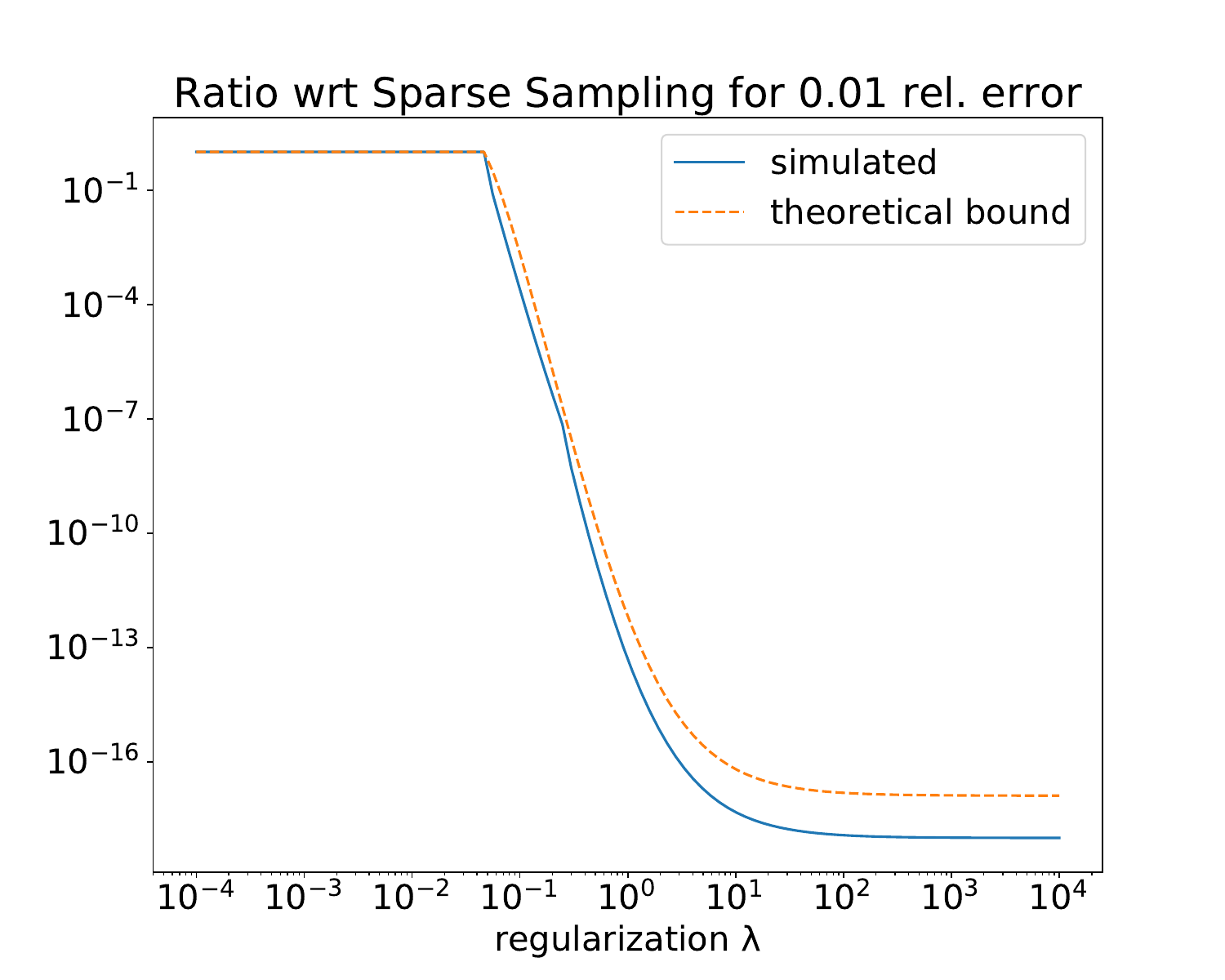}
    \end{subfigure}
    \caption{Number of calls to the generative model made by $\nsamplev$ as a function of the regularization parameter $\lambda$ in order to achieve a relative error of $0.01$ (left) and its ratio with respect to the number of calls that would be required by Sparse Sampling in the same setting (right). The parameters used were: $\gamma=0.2$, $\algdelta=0.1$ and $K=2$.}
    \label{fig:nsamplev-lambda}
\end{figure*}

\subsection{Checking the consistency guarantee}

Using our MCTS analogy in Section \ref{sec:comparison-to-mcts}, the two most computationally costly operations of $\ouralgo$ are the $\selectaction$ and the $\evalleaf$ functions. They both rely on estimates of the $Q$ function with some required accuracy. Hence, for a \textit{sanity-check}, we implemented the function $\estV$ by replacing its calls to $\estQ$($s$, accuracy) by the true Q function at state $s$ plus some accuracy-dependent noise, and we denote this simplified version of $\estV$ by $\estV_{\mathrm{check}}$ . This allowed us to verify that our bounds for the bias of the $\estV$ outputs (Lemma \ref{lemma:consistency}) are correct. After $N_{\mathrm{sim}}$ calls to $\estV_{\mathrm{check}}(\epsilon, \algdelta)$, we compute the error

\begin{align}
    \hat{\Delta}(s, \epsilon) = \frac{1}{N_{\mathrm{sim}}} \sum_{i=1}^{N_{\mathrm{sim}}} \pa{\hat{V}_i(s, \epsilon) - V(s)}
\end{align}
where $s$ is a reference state and $\hat{V}_i(s, \epsilon)$ is the output of the $i$-th call to $\estV_{\mathrm{check}}(s, \epsilon)$. Lemma \ref{lemma:consistency} states that, for some high probability event $B$, we have $-\epsilon \leq \EE{\hat{\Delta}(s, \epsilon) | B} \leq \epsilon$. Hence, for large $N_{\mathrm{sim}}$, we should have $-\epsilon \leq \hat{\Delta}(s, \epsilon) \leq \epsilon$ approximately. 

Table \ref{tab:consistency-check} shows simulated values of $\hat{\Delta}(s, \epsilon)$ and their standard deviations for different environments. The value of $N_{\mathrm{sim}}$ was chosen so that $\hat{\Delta}(s, \epsilon)$ is close to its mean, by using Hoeffdings's inequality and assuming that $\hat{V}_i(s, \epsilon)$ is bounded by $C_\gamma$ (which holds with high probability, by Lemma \ref{lemma:consistency}).

\begin{table}[ht!]
\centering
\begin{tabular}{@{}ll@{}}
\toprule
Environment               & $\hat{\Delta}(s, \epsilon)$      \\ \midrule
5-Chain         & $(-1.21\pm 1.65)\times10^{-2}$ \\
10-Chain        & $(-1.20\pm 1.63)\times10^{-2}$  \\
5x5-GridWorld   & $(-0.71\pm 2.04)\times10^{-2}$   \\
10x10-GridWorld & $(-0.71\pm 2.03)\times10^{-2}$   \\ \bottomrule
\end{tabular}
\vspace{1em}
\caption{Simulated values of $\hat{\Delta}(s, \epsilon)$ and its standard deviation for different environments, for $\epsilon = 0.35$. The value of $\epsilon$ was chosen such that $\epsilon \leq \kappa/4$ in all environments. The parameters used were: $N_{\mathrm{sim}} = 32723$, $\gamma=0.2$ and $\lambda = 10$. The $n$-Chain environments have $K=2$ and $n$ states and the $n\times n$-GridWorld environments have $K=4$ and $n^2$ states.}
\label{tab:consistency-check}
\end{table}


The code for the  experiments is at \url{https://github.com/omardrwch/smoothcruiser-check}.

\end{document}